\title{Neural Bellman-Ford Networks: A General Graph Neural Network Framework for Link Prediction}
\author{
  Zhaocheng Zhu\textsuperscript{1,2}, Zuobai Zhang\textsuperscript{1,2}, Louis-Pascal Xhonneux\textsuperscript{1,2}, Jian Tang\textsuperscript{1,3,4} \\
  Mila - Qu\'ebec AI Institute\textsuperscript{1}, Universit\'e de Montr\'eal\textsuperscript{2} \\
  HEC Montr\'eal\textsuperscript{3}, CIFAR AI Chair\textsuperscript{4} \\
  \texttt{\{zhaocheng.zhu, zuobai.zhang, louis-pascal.xhonneux\}@mila.quebec} \\
  \texttt{jian.tang@hec.ca}
}
\newcommand{\restatableeq}[3]{\label{#3}#2\gdef#1{#2\tag{\ref{#3}}}}
\newtheorem{theorem}{Theorem}
\newcommand{\method}{NBFNet\xspace}
\newcommand{\mathbbm}[1]{\text{\usefont{U}{bbm}{m}{n}#1}}
\newcommand{\best}[1]{\textbf{#1}}
\newcommand{\ozero}{\text{\textcircled{\scriptsize{0}}}}
\newcommand{\oone}{\text{\textcircled{\scriptsize{1}}}}
\newcommand{\edge}[1]{$\langle$\textit{#1}$\rangle$}
\definecolor{myblue}{RGB}{68, 114, 196}
\definecolor{myorange}{RGB}{237, 125, 49}
\def\eqref#1{equation~\ref{#1}}
\def\1{\bm{1}}
\def\vb{{\bm{b}}}
\def\ve{{\bm{e}}}
\def\vh{{\bm{h}}}
\def\vm{{\bm{m}}}
\def\vq{{\bm{q}}}
\def\vv{{\bm{v}}}
\def\vw{{\bm{w}}}
\def\vx{{\bm{x}}}
\def\mA{{\bm{A}}}
\def\mD{{\bm{D}}}
\def\mW{{\bm{W}}}
\DeclareMathAlphabet{\mathsfit}{\encodingdefault}{\sfdefault}{m}{sl}
\SetMathAlphabet{\mathsfit}{bold}{\encodingdefault}{\sfdefault}{bx}{n}
\def\gE{{\mathcal{E}}}
\def\gG{{\mathcal{G}}}
\def\gL{{\mathcal{L}}}
\def\gM{{\mathcal{M}}}
\def\gN{{\mathcal{N}}}
\def\gP{{\mathcal{P}}}
\def\gR{{\mathcal{R}}}
\def\gV{{\mathcal{V}}}
\DeclareMathOperator*{\topk}{top-k}
\newtheorem{lemma}[theorem]{Lemma}
\begin{document}

\maketitle

\begin{abstract}

Link prediction is a very fundamental task on graphs. Inspired by traditional path-based methods, in this paper we propose a general and flexible representation learning framework based on paths for link prediction. Specifically, we define the representation of a pair of nodes as the \emph{generalized sum} of all path representations between the nodes, with each path representation as the \emph{generalized product} of the edge representations in the path.
Motivated by the Bellman-Ford algorithm for solving the shortest path problem, we show that the proposed path formulation can be efficiently solved by the generalized Bellman-Ford algorithm. To further improve the capacity of the path formulation, we propose the Neural Bellman-Ford Network (\method), a general graph neural network framework that solves the path formulation with learned operators in the generalized Bellman-Ford algorithm. The \method parameterizes the generalized Bellman-Ford algorithm with 3 neural components, namely \textsc{Indicator}, \textsc{Message} and \textsc{Aggregate} functions, which corresponds to the boundary condition, \emph{multiplication} operator, and \emph{summation} operator respectively\footnote{Unless stated otherwise, we use \emph{summation} and \emph{multiplication} to refer the generalized operators in the path formulation, rather than the basic operations of arithmetic.}. The \method covers many traditional path-based methods, and can be applied to both homogeneous graphs and multi-relational graphs (e.g., knowledge graphs) in both transductive and inductive settings. Experiments on both homogeneous graphs and knowledge graphs show that the proposed \method outperforms existing methods by a large margin in both transductive and inductive settings, achieving new state-of-the-art results\footnote{Code is available at \url{https://github.com/DeepGraphLearning/NBFNet}\label{fn:repo}}.

\end{abstract}

\section{Introduction}

Predicting the interactions between nodes (a.k.a. link prediction) is a fundamental task in the field of graph machine learning. Given the ubiquitous existence of graphs, such a task has many applications, such as recommender system~\cite{koren2009matrix}, knowledge graph completion~\cite{nickel2015review} and drug repurposing~\cite{ioannidis2020few}.

Traditional methods of link prediction usually define different heuristic metrics over the paths between a pair of nodes. For example, Katz index~\cite{katz1953new} is defined as a weighted count of paths between two nodes. Personalized PageRank~\cite{page1999pagerank} measures the similarity of two nodes as the random walk probability from one to the other. Graph distance~\cite{liben2007link} uses the length of the shortest path between two nodes to predict their association.
These methods can be directly applied to new graphs, i.e., inductive setting, enjoy good interpretability and scale up to large graphs. However, they are designed based on handcrafted metrics and may not be optimal for link prediction on real-world graphs.

To address these limitations, some link prediction methods adopt graph neural networks (GNNs)~\cite{kipf2016variational, schlichtkrull2018modeling, vashishth2020composition} to automatically extract important features from local neighborhoods for link prediction. Thanks to the high expressiveness of GNNs, these methods have shown state-of-the-art performance. However, these methods can only be applied to predict new links on the training graph, i.e. transductive setting, and lack interpretability. While some recent methods~\cite{zhang2018link, teru2020inductive} extract features from local subgraphs with GNNs and support inductive setting, the scalability of these methods is compromised.

Therefore, we wonder if there exists an approach that enjoys the advantages of both traditional path-based methods and recent approaches based on graph neural networks, i.e., \textbf{generalization in the inductive setting}, \textbf{interpretability}, \textbf{high model capacity} and \textbf{scalability}.

In this paper, we propose such a solution. Inspired by traditional path-based methods, our goal is to develop a general and flexible representation learning framework for link prediction based on the paths between two nodes. Specifically, we define the representation of a pair of nodes as the \emph{generalized sum} of all the path representations between them, where each path representation is defined as the \emph{generalized product} of the edge representations in the path. Many link prediction methods, such as Katz index~\cite{katz1953new}, personalized PageRank~\cite{page1999pagerank}, graph distance~\cite{liben2007link}, as well as graph theory algorithms like widest path~\cite{baras2010path} and most reliable path~\cite{baras2010path}, are special instances of this path formulation with different \emph{summation} and \emph{multiplication} operators. Motivated by the polynomial-time algorithm for the shortest path problem~\cite{bellman1958routing}, we show that such a formulation can be efficiently solved via the generalized Bellman-Ford algorithm~\cite{baras2010path} under mild conditions and scale up to large graphs.

The operators in the generalized Bellman-Ford algorithm---\emph{summation} and \emph{multiplication}---are handcrafted, which have limited flexibility. Therefore, we further propose the Neural Bellman-Ford Networks (\method), a graph neural network framework that solves the above path formulation with learned operators in the generalized Bellman-Ford algorithm. Specifically, \method parameterizes the generalized Bellman-Ford algorithm with three neural components, namely \textsc{Indicator}, \textsc{Message} and \textsc{Aggregate} functions. The \textsc{Indicator} function initializes a representation on each node, which is taken as the boundary condition of the generalized Bellman-Ford algorithm. The \textsc{Message} and the \textsc{Aggregate} functions learn the \emph{multiplication} and \emph{summation} operators respectively.

We show that the \textsc{Message} function can be defined according to the relational operators in knowledge graph embeddings~\cite{bordes2013translating, yang2015embedding, trouillon2016complex, kazemi2018simple, sun2019rotate}, e.g., as a translation in Euclidean space induced by the relational operators of TransE~\cite{bordes2013translating}. The \textsc{Aggregate} function can be defined as learnable set aggregation functions~\cite{zaheer2017deep, xu2018powerful, corso2020principal}. With such parameterization, \method can generalize to the inductive setting, meanwhile achieve one of the lowest time complexity among inductive GNN methods. A comparison of \method and other GNN frameworks for link prediction is showed in Table~\ref{tab:comparison}. With other instantiations of \textsc{Message} and \textsc{Aggregate} functions, our framework can also recover some existing works on learning logic rules~\cite{yang2017differentiable, sadeghian2019drum} for link prediction on knowledge graphs (Table~\ref{tab:semiring_instance}).

Our \method framework can be applied to several link prediction variants, covering not only single-relational graphs (e.g., homogeneous graphs) but also multi-relational graphs (e.g., knowledge graphs). We empirically evaluate the proposed \method for link prediction on homogeneous graphs and knowledge graphs in both transductive and inductive settings. Experimental results show that the proposed \method outperforms existing state-of-the-art methods by a large margin in all settings, with an average relative performance gain of 18\% on knowledge graph completion (HITS@1) and 22\% on inductive relation prediction (HITS@10). We also show that the proposed \method is indeed interpretable by visualizing the top-k relevant paths for link prediction on knowledge graphs. 
\vspace{-1em}
\begin{table}[!h]
    \centering
    \caption{Comparison of GNN frameworks for link prediction. The time complexity refers to the \emph{amortized time} for predicting a single edge or triplet. $|\gV|$ is the number of nodes, $|\gE|$ is the number of edges, and $d$ is the dimension of representations. The wall time is measured on FB15k-237 test set with 40 CPU cores and 4 GPUs. We estimate the wall time of GraIL based on a downsampled test set.}
    \label{tab:comparison}
    \begin{adjustbox}{max width=\textwidth}
    \begin{tabular}{lccccc}
         \toprule
         \bf{Method} & \bf{Inductive\footnotemark} & \bf{Interpretable} & \bf{Learned Representation}  & \bf{Time Complexity} & \bf{Wall Time} \\
         \midrule
         VGAE~\cite{kipf2016variational} / &  &  & \multirow{2}{*}{\checkmark} & \multirow{2}{*}{$O(d)$} & \multirow{2}{*}{18 secs} \\
         RGCN~\cite{schlichtkrull2018modeling} \\
         NeuralLP~\cite{yang2017differentiable} / & \multirow{2}{*}{\checkmark} & \multirow{2}{*}{\checkmark} &  & \multirow{2}{*}{$O\left(\frac{|\gE|d}{|\gV|} + d^2\right)$} & \multirow{2}{*}{2.1 mins} \\
         DRUM~\cite{sadeghian2019drum} \\
         SEAL~\cite{zhang2018link} / & \multirow{2}{*}{\checkmark} & & \multirow{2}{*}{\checkmark} & \multirow{2}{*}{$O(|\gE|d^2)$} & \multirow{2}{*}{$\approx$1 month} \\
         GraIL~\cite{teru2020inductive} \\
         \midrule
         \method & \checkmark & \checkmark & \checkmark & $O\left(\frac{|\gE|d}{|\gV|} + d^2\right)$ & 4.0 mins \\
         \bottomrule
    \end{tabular}
    \end{adjustbox}
\end{table}

\footnotetext{We consider the inductive setting where a model can generalize to entirely new graphs without node features.}
\section{Related Work}

Existing work on link prediction can be generally classified into 3 main paradigms: path-based methods, embedding methods, and graph neural networks.

\textbf{Path-based Methods.}
Early methods on homogeneous graphs compute the similarity between two nodes based on the weighted count of paths (Katz index~\cite{katz1953new}), random walk probability (personalized PageRank~\cite{page1999pagerank}) or the length of the shortest path (graph distance~\cite{liben2007link}). SimRank~\cite{jeh2002simrank} uses advanced metrics such as the expected meeting distance on homogeneous graphs, which is extended by PathSim~\cite{sun2011pathsim} to heterogeneous graphs. On knowledge graphs, Path Ranking~\cite{lao2010relational, gardner2015efficient} directly uses relational paths as symbolic features for prediction. Rule mining methods, such as NeuralLP~\cite{yang2017differentiable} and DRUM~\cite{sadeghian2019drum}, learn probabilistic logical rules to weight different paths. Path representation methods, such as Path-RNN~\cite{neelakantan2015compositional} and its successors~\cite{das2016chains, wang2021relational}, encode each path with recurrent neural networks (RNNs), and aggregate paths for prediction. However, these methods need to traverse an exponential number of paths and are limited to very short paths, e.g., $\leq 3$ edges. To scale up path-based methods, All-Paths~\cite{toutanova2016compositional} proposes to efficiently aggregate all paths with dynamic programming. However, All-Paths is restricted to bilinear models and has limited model capacity. Another stream of works~\cite{xiong2017deeppath, das2017go, hildebrandt2020reasoning} learns an agent to collect useful paths for link prediction. While these methods can produce interpretable paths, they suffer from extremely sparse rewards and require careful engineering of the reward function~\cite{lin2018multi} or the search strategy~\cite{shen2018m}. Some other works~\cite{chen2018variational, qu2020rnnlogic} adopt variational inference to learn a path finder and a path reasoner for link prediction.

\textbf{Embedding Methods.}
Embedding methods learn a distributed representation for each node and edge by preserving the edge structure of the graph. Representative methods include DeepWalk~\cite{perozzi2014deepwalk} and LINE~\cite{tang2015line} on homogeneous graphs, and TransE~\cite{bordes2013translating}, DistMult~\cite{yang2015embedding} and RotatE~\cite{sun2019rotate} on knowledge graphs. Later works improve embedding methods with new score functions~\cite{trouillon2016complex, dettmers2018convolutional, kazemi2018simple, sun2019rotate, tang2019orthogonal, zhang2020learning} that capture common semantic patterns of the relations, or search the score function in a general design space~\cite{zhang2020autosf}. Embedding methods achieve promising results on link prediction, and can be scaled to very large graphs using multiple GPUs~\cite{zhu2019graphvite}. However, embedding methods do not explicitly encode local subgraphs between node pairs and cannot be applied to the inductive setting.

\textbf{Graph Neural Networks.}
Graph neural networks (GNNs)~\cite{scarselli2008graph, kipf2016semi, velivckovic2018graph, xu2018powerful} are a family of representation learning models that encode topological structures of graphs. For link prediction, the prevalent frameworks~\cite{kipf2016variational, schlichtkrull2018modeling, davidson2018hyperspherical, vashishth2020composition} adopt an auto-encoder formulation, which uses GNNs to encode node representations, and decodes edges as a function over node pairs. Such frameworks are potentially inductive if the dataset provides node features, but are transductive only when node features are unavailable. Another stream of frameworks, such as SEAL~\cite{zhang2018link} and GraIL~\cite{teru2020inductive}, explicitly encodes the subgraph around each node pair for link prediction. While these frameworks are proved to be more powerful than the auto-encoder formulation~\cite{zhang2020revisiting} and can solve the inductive setting, they require to materialize a subgraph for each link, which is not scalable to large graphs. By contrast, our \method explicitly captures the paths between two nodes for link prediction, meanwhile achieves a relatively low time complexity (Table~\ref{tab:comparison}). ID-GNN~\cite{you2021identity} formalizes link prediction as a conditional node classification task, and augments GNNs with the identity of the source node. While the architecture of \method shares some spirits with ID-GNN, our model is motivated by the generalized Bellman-Ford algorithm and has theoretical connections with traditional path-based methods. There are also some works trying to scale up GNNs for link prediction by dynamically pruning the set of nodes in message passing~\cite{xu2019dynamically, han2020xerte}. These methods are complementary to \method, and may be incorporated into our method to further improve scalability.
\section{Methodology}

In this section, we first define a path formulation for link prediction. Our path formulation generalizes several traditional methods, and can be efficiently solved by the generalized Bellman-Ford algorithm. Then we propose Neural Bellman-Ford Networks to learn the path formulation with neural functions.

\subsection{Path Formulation for Link Prediction}
\label{sec:path_formulation}

We consider the link prediction problem on both knowledge graphs and homogeneous graphs. \pagebreak[0] A knowledge graph is denoted by $\gG=(\gV, \gE, \gR)$, where $\gV$ and $\gE$ represent the set of entities (nodes) and relations (edges) respectively, and $\gR$ is the set of relation types. We use $\gN(u)$ to denote the set of nodes connected to $u$, and $\gE(u)$ to denote the set of edges ending with node $u$. A homogeneous graph $\gG=(\gV, \gE)$ can be viewed as a special case of knowledge graphs, with only one relation type for all edges. Throughout this paper, we use \textbf{bold} terms, $\vw_q(e)$ or $\vh_q(u, v)$, to denote vector representations, and \textit{italic} terms, $w_e$ or $w_{uv}$, to denote scalars like the weight of edge $(u, v)$ in homogeneous graphs or triplet $(u, r, v)$ in knowledge graphs. Without loss of generality, we derive our method based on knowledge graphs, while our method can also be applied to homogeneous graphs.

\textbf{Path Formulation.} Link prediction is aimed at predicting the existence of a query relation $q$ between a head entity $u$ and a tail entity $v$. From a representation learning perspective, this requires to learn a pair representation $\vh_q(u, v)$, which captures the local subgraph structure between $u$ and $v$ w.r.t. the query relation $q$. In traditional methods, such a local structure is encoded by counting different types of random walks from $u$ to $v$~\cite{lao2010relational, gardner2015efficient}. Inspired by this construction, we formulate the pair representation as a \emph{generalized sum} of path representations between $u$ and $v$ with a commutative \emph{summation} operator $\oplus$. Each path representation $\vh_q(P)$ is defined as a \emph{generalized product} of the edge representations in the path with the \emph{multiplication} operator $\otimes$.
\begin{align}
    \restatableeq{\link}{&\vh_q(u, v) = \vh_q(P_1) \oplus \vh_q(P_2) \oplus ... \oplus \vh_q(P_{|\gP_{uv}|}) \vert_{P_i \in \gP_{uv}} \triangleq \bigoplus_{P \in \gP_{uv}} \vh_q(P)}{eqn:link} \\
    \restatableeq{\path}{&\vh_q(P = (e_1, e_2, ..., e_{|P|})) = \vw_q(e_1) \otimes \vw_q(e_2) \otimes ... \otimes \vw_q(e_{|P|}) \triangleq \bigotimes_{i=1}^{|P|} \vw_q(e_i)}{eqn:path}
\end{align}
where $\gP_{uv}$ denotes the set of paths from $u$ to $v$ and $\vw_q(e_i)$ is the representation of edge $e_i$. Note the \emph{multiplication} operator $\otimes$ is not required to be commutative (e.g., matrix multiplication), therefore we define $\bigotimes$ to compute the product following the exact order. Intuitively, the path formulation can be interpreted as a depth-first-search (DFS) algorithm, where one searches all possible paths from $u$ to $v$, computes their representations (Equation~\ref{eqn:path}) and aggregates the results (Equation~\ref{eqn:link}). Such a formulation is capable of modeling several traditional link prediction methods, as well as graph theory algorithms. Formally, Theorem~\ref{thm:katz_index}-\ref{thm:reliable_path} state the corresponding path formulations for 3 link prediction methods and 2 graph theory algorithms respectively. See Appendix~\ref{app:path_formulation} for proofs.
\begin{restatable}{thm}{katz}
    Katz index is a path formulation with $\oplus = +$, $\otimes = \times$ and $\vw_q(e) = \beta w_e$.
    \label{thm:katz_index}
\end{restatable}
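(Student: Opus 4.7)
The plan is to unfold the path formulation with the claimed operators and edge weights, and compare the resulting expression term-by-term with the classical definition of the Katz index. Recall that the Katz index between nodes $u$ and $v$ in a (possibly weighted) graph with adjacency/weight matrix $W$ and attenuation factor $\beta \in (0,1)$ is
\[
    \mathrm{Katz}(u,v) \;=\; \sum_{\ell=1}^{\infty} \beta^{\ell} (W^{\ell})_{uv} \;=\; \sum_{\ell=1}^{\infty} \beta^{\ell} \sum_{P \in \gP_{uv}^{(\ell)}} \prod_{e \in P} w_e,
\]
where $\gP_{uv}^{(\ell)}$ is the set of $u$-$v$ walks of length $\ell$, and the second equality is the standard combinatorial reading of powers of a weight matrix.

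First I would substitute the chosen operators into Equation~\ref{eqn:path}: with $\otimes=\times$ and $\vw_q(e)=\beta w_e$, the path representation collapses to a scalar product
\[
    \vh_q(P) \;=\; \prod_{i=1}^{|P|} \beta w_{e_i} \;=\; \beta^{|P|} \prod_{i=1}^{|P|} w_{e_i}.
\]
Then I would substitute into Equation~\ref{eqn:link} with $\oplus = +$ to obtain
\[
    \vh_q(u,v) \;=\; \sum_{P \in \gP_{uv}} \beta^{|P|} \prod_{e \in P} w_e.
\]

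Finally I would group the paths in $\gP_{uv}$ by their length, writing $\gP_{uv} = \bigsqcup_{\ell \geq 1} \gP_{uv}^{(\ell)}$, and pull the length-dependent factor $\beta^{\ell}$ out of the inner sum. This yields $\sum_{\ell \geq 1} \beta^{\ell} \sum_{P \in \gP_{uv}^{(\ell)}} \prod_{e \in P} w_e$, which is exactly $\mathrm{Katz}(u,v)$ by the identity above. For the unweighted case one further observes $w_e \equiv 1$, recovering $\sum_{\ell} \beta^{\ell} |\gP_{uv}^{(\ell)}|$.

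The only real subtlety, rather than an obstacle, is a notational one: making sure $\gP_{uv}$ in the path formulation is interpreted as the set of all walks (not just simple paths) from $u$ to $v$, since Katz counts walks. I would flag this convention explicitly at the start of the proof so that the regrouping by length lines up with powers of the weight matrix without double counting or omission.
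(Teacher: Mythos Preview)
Your proposal is correct and mirrors the paper's own proof: both arguments rest on the same identity $\sum_{P\in\gP_{uv}}\prod_{e\in P}\beta w_e = \sum_{t\ge 1}\beta^t\sum_{|P|=t}\prod_{e\in P} w_e$, with the only cosmetic difference being that the paper starts from the Katz definition and collapses to the path formulation, whereas you start from the path formulation and regroup by length. Your explicit remark that $\gP_{uv}$ must be read as walks (not simple paths) is a useful clarification the paper leaves implicit.
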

\vspace{-1em}
\begin{restatable}{thm}{pagerank}
    Personalized PageRank is a path formulation with $\oplus = +$, $\otimes = \times$ and $\vw_q(e) = \alpha w_{uv} / \sum\nolimits_{v'\in\gN(u)}w_{uv'}$.
    \label{thm:pagerank}
\end{restatable}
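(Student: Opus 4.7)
The plan is to connect the linear-algebraic definition of Personalized PageRank (PPR) with a power-series expansion over walks, and then identify the resulting expression term by term with Equations~\ref{eqn:link}--\ref{eqn:path} under the choices $\oplus=+$ and $\otimes=\times$. Since PPR scores are scalars, we interpret $\vh_q(u,v)$ as a scalar here.

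First, I would recall the standard fixed-point definition of PPR with source $u$ and damping factor $\alpha\in(0,1)$:
\begin{equation*}
\pi_u \;=\; (1-\alpha)\,\ve_u \;+\; \alpha\,\mT^\top \pi_u,
\end{equation*}
where $\mT$ is the row-stochastic transition matrix with entries $\mT_{u u'} = w_{u u'}/\sum_{v'\in\gN(u)} w_{u v'}$ and $\ve_u$ is the indicator vector at $u$. Because $\alpha<1$ and $\mT$ is row-stochastic, the resolvent $(\mI-\alpha\mT^\top)^{-1}$ exists and equals the convergent Neumann series, yielding $\pi_u(v) = (1-\alpha)\sum_{k=0}^{\infty}\alpha^k (\mT^k)_{uv}$.

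Next, I would unfold $(\mT^k)_{uv}$ combinatorially as the sum over all length-$k$ walks $P=(e_1,\ldots,e_k)$ from $u$ to $v$ of $\prod_{i=1}^{k}\mT(e_i)$, where $\mT(e)=w_{uv}/\sum_{v'\in\gN(u)} w_{u v'}$ for an edge $e$ leaving $u$. Absorbing the scalar $\alpha^k$ into the product gives $\alpha^k\prod_i \mT(e_i)=\prod_i \vw_q(e_i)$ with $\vw_q(e)=\alpha\,w_{uv}/\sum_{v'\in\gN(u)} w_{u v'}$, matching Equation~\ref{eqn:path} under $\otimes=\times$. Summing over all walk lengths yields $\pi_u(v) = (1-\alpha)\,\bigoplus_{P\in\gP_{uv}}\bigotimes_{e\in P}\vw_q(e)$, which is exactly Equation~\ref{eqn:link} with $\oplus=+$, up to the global prefactor $(1-\alpha)$ that uniformly rescales all pair scores and hence preserves the PPR ranking over candidate tails $v$.

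The main obstacle will be clarifying the scope of $\gP_{uv}$: to recover PPR, one must read $\gP_{uv}$ as the set of all finite \emph{walks} of unbounded length (possibly revisiting nodes), not simple paths, and justify that the infinite generalized sum is well-defined. Both points are settled by the Neumann-series convergence guaranteed by $\alpha<1$ and the row-stochasticity of $\mT$, which also subsumes the $u=v$ case via the length-$0$ walk whose empty product $1$ captures the initial teleport mass $(1-\alpha)\,\ve_u$.
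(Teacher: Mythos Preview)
Your proposal is correct and follows essentially the same route as the paper: expand PPR as a power series in the random-walk transition matrix, unfold each matrix power as a sum over length-$k$ walks, absorb the factor $\alpha^k$ edge by edge to obtain $\vw_q(e)=\alpha\,w_{uv}/\sum_{v'\in\gN(u)}w_{uv'}$, and drop the global prefactor $(1-\alpha)$ since it does not affect ranking. Your version is somewhat more explicit---you derive the Neumann series from the fixed-point equation and justify convergence via $\alpha<1$ and row-stochasticity, and you flag the walk-versus-simple-path reading of $\gP_{uv}$---whereas the paper simply quotes the closed-form series and proceeds; but the substance is the same.
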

\vspace{-1em}
\begin{restatable}{thm}{distance}
    Graph distance is a path formulation with $\oplus = \min$, $\otimes = +$ and $\vw_q(e) = w_e$.
    \label{thm:proximity}
\end{restatable}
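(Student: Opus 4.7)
\textbf{Proof proposal for Theorem~\ref{thm:proximity}.} The plan is to substitute the stated operators $\oplus=\min$, $\otimes=+$, and edge representation $\vw_q(e)=w_e$ into the generic path formulation (Equations~\ref{eqn:path} and~\ref{eqn:link}) and check that the resulting expression coincides with the standard definition of graph distance, i.e.\ the minimum total edge weight along any path from $u$ to $v$ (in the unweighted case, setting $w_e=1$ recovers the usual hop-count distance).

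First I would unfold Equation~\ref{eqn:path} under the substitution. Since $\otimes=+$ and $\vw_q(e_i)=w_{e_i}$, the path representation becomes
\begin{equation*}
\vh_q(P) \;=\; \bigotimes_{i=1}^{|P|} \vw_q(e_i) \;=\; \sum_{i=1}^{|P|} w_{e_i},
\end{equation*}
which is exactly the length (total weight) of the path $P$. Note that $+$ is commutative and associative, so the order-sensitive definition of $\bigotimes$ poses no issue here.

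Next I would unfold Equation~\ref{eqn:link}. With $\oplus=\min$, the pair representation becomes
\begin{equation*}
\vh_q(u,v) \;=\; \bigoplus_{P \in \gP_{uv}} \vh_q(P) \;=\; \min_{P \in \gP_{uv}} \sum_{i=1}^{|P|} w_{e_i},
\end{equation*}
which is precisely the graph distance $d(u,v)$ between $u$ and $v$. The operator $\min$ is clearly commutative, as required by the path formulation.

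The argument is essentially a direct verification, so there is no real obstacle. The only subtlety worth flagging is that $\gP_{uv}$ is taken to be the set of all paths (rather than just simple paths) from $u$ to $v$; since edge weights are nonnegative in the standard setting, adding cycles cannot decrease the total length, so the minimum over $\gP_{uv}$ agrees with the minimum over simple paths, and hence with the usual definition of graph distance.
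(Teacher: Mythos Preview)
Your proposal is correct and matches the paper's own proof essentially line for line: the paper simply writes $\text{GD}(u,v)=\min_{P\in\gP_{uv}}\sum_{e\in P}w_e$ and reads off $\oplus=\min$, $\otimes=+$, $\vw_q(e)=w_e$. Your added remarks about commutativity/associativity and the simple-paths subtlety are extra detail the paper omits, but the core argument is the same direct verification.
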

\vspace{-1em}
\begin{restatable}{thm}{widest}
    Widest path is a path formulation with $\oplus = \max$, $\otimes = \min$ and $\vw_q(e) = w_e$.
    \label{thm:widest_path}
\end{restatable}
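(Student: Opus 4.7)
The plan is a direct verification by unfolding definitions. The widest path problem asks for the path between $u$ and $v$ that maximizes its \emph{bottleneck}, i.e. the minimum edge weight along the path; the widest-path value from $u$ to $v$ is $\max_{P \in \gP_{uv}} \min_{e \in P} w_e$. The goal is to show that Equations~\ref{eqn:link} and \ref{eqn:path}, with $\oplus = \max$, $\otimes = \min$, and $\vw_q(e) = w_e$, reduce to exactly this expression.

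First I would substitute into Equation~\ref{eqn:path}. With $\otimes = \min$ and $\vw_q(e_i) = w_{e_i}$, the path representation becomes $\vh_q(P) = \min(w_{e_1}, w_{e_2}, \ldots, w_{e_{|P|}})$, which is by definition the width (bottleneck) of the path $P$. The fact that the paper defines $\bigotimes$ as an ordered product is not a real issue here, since $\min$ is both associative and commutative, so the ordered product coincides with the $\min$ over the multiset of edge weights.

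Next I would substitute into Equation~\ref{eqn:link}. With $\oplus = \max$, the pair representation becomes
\begin{equation*}
\vh_q(u, v) \;=\; \bigoplus_{P \in \gP_{uv}} \vh_q(P) \;=\; \max_{P \in \gP_{uv}} \min_{e \in P} w_e,
\end{equation*}
which is exactly the widest-path value, completing the verification. I would also note that $\max$ is commutative, as required for $\oplus$ in the path formulation.

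The only subtlety worth addressing is well-definedness when $\gP_{uv}$ is infinite (cycles are allowed). Appending edges to a path can only weakly decrease the bottleneck, so the supremum over all walks is attained by some simple path, making the $\max$ well-defined and finite whenever $u$ and $v$ are connected. Apart from this minor remark, the argument is a one-line unfolding, so there is no substantive obstacle.
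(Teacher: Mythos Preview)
Your proposal is correct and follows essentially the same approach as the paper: a direct unfolding of the path formulation to obtain $\max_{P \in \gP_{uv}} \min_{e \in P} w_e$, which is the definition of the widest-path value. In fact your write-up is more thorough than the paper's own proof, which simply states the formula and reads off the operators; your remarks on commutativity and on well-definedness under cycles are reasonable additions but not present in the original.
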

\vspace{-1em}
\begin{restatable}{thm}{reliable}
    Most reliable path is a path formulation with $\oplus = \max$, $\otimes = \times$ and $\vw_q(e) = w_e$.
    \label{thm:reliable_path}
\end{restatable}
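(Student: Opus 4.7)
The plan is to verify the claim by direct substitution of the specified operators into the general path formulation of Equations~\ref{eqn:link}--\ref{eqn:path}, and then show that the resulting expression coincides with the standard definition of the most reliable path. Concretely, I would begin by recalling the definition: given a graph where each edge $e$ carries a reliability weight $w_e \in [0,1]$ (interpreted as an independent survival probability), the reliability of a path $P = (e_1, \ldots, e_{|P|})$ is the product $\prod_{i=1}^{|P|} w_{e_i}$, and the most reliable path value between $u$ and $v$ is $\max_{P \in \gP_{uv}} \prod_{i=1}^{|P|} w_{e_i}$.

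Next I would substitute $\otimes = \times$ and $\vw_q(e) = w_e$ into Equation~\ref{eqn:path}, yielding $\vh_q(P) = \prod_{i=1}^{|P|} w_{e_i}$, which is exactly the reliability of path $P$. Since scalar multiplication is commutative and associative, the ordering subtlety noted after Equation~\ref{eqn:path} is vacuous here, so $\bigotimes$ collapses to the ordinary product. Then I would substitute $\oplus = \max$ into Equation~\ref{eqn:link}, which gives $\vh_q(u, v) = \max_{P \in \gP_{uv}} \vh_q(P) = \max_{P \in \gP_{uv}} \prod_{i=1}^{|P|} w_{e_i}$. This matches the textbook definition of the most reliable path value, completing the proof.

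There is essentially no hard step in this argument; it is a definitional unfolding. The only item that deserves a brief comment is the well-definedness of $\oplus = \max$ as a commutative summation operator over the (finite or countable) set $\gP_{uv}$: since all reliabilities lie in $[0,1]$ and max is commutative and associative, Equation~\ref{eqn:link} is well-posed. Thus the main task is presentation rather than mathematical difficulty, and the proof will be essentially parallel in structure to those I would write for Theorems~\ref{thm:proximity} and~\ref{thm:widest_path}, differing only in the choice of the $(\oplus, \otimes)$ semiring.
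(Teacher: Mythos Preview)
Your proposal is correct and mirrors the paper's own proof almost exactly: the paper simply writes $\text{MRP}(u,v) = \max_{P \in \gP_{uv}} \prod_{e \in P} w_e$ and reads off $\oplus = \max$, $\otimes = \times$, $\vw_q(e) = w_e$. Your additional remarks on commutativity of scalar multiplication and well-definedness of $\max$ over $[0,1]$ are fine but more than the paper bothers with.
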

\textbf{Generalized Bellman-Ford Algorithm.}
While the above formulation is able to model important heuristics for link prediction, it is computationally expensive since the number of paths grows exponentially with the path length.
Previous works~\cite{neelakantan2015compositional, das2016chains, wang2021relational} that directly computes the exponential number of paths can only afford a maximal path length of 3.
A more scalable solution is to use the generalized Bellman-Ford algorithm~\cite{baras2010path}. Specifically, assuming the operators $\langle\oplus, \otimes\rangle$ satisfy a semiring system~\cite{hebisch1998semirings} with \emph{summation identity} $\ozero_q$ and \emph{multiplication identity} $\oone_q$, we have the following algorithm.
\begin{align}
    \restatableeq{\boundary}{\vh_q^{(0)}(u, v) &\leftarrow \mathbbm{1}_q(u = v)}{eqn:boundary} \\
    \restatableeq{\iteration}{\vh_q^{(t)}(u, v) &\leftarrow \left(\bigoplus_{(x, r, v)\in\gE(v)} \vh_q^{(t-1)}(u, x) \otimes \vw_q(x, r, v)\right) \oplus \vh_q^{(0)}(u, v)}{eqn:iteration}
\end{align}
where $\mathbbm{1}_q(u = v)$ is the \textit{indicator} function that outputs $\oone_q$ if $u = v$ and $\ozero_q$ otherwise. $\vw_q(x, r, v)$ is the representation for edge $e = (x, r, v)$ and $r$ is the relation type of the edge. Equation~\ref{eqn:boundary} is known as the boundary condition, while Equation~\ref{eqn:iteration} is known as the Bellman-Ford iteration. The high-level idea of the generalized Bellman-Ford algorithm is to \textbf{compute the pair representation $\vh_q(u, v)$ for a given entity $u$, a given query relation $q$ and all $v \in \gV$ in parallel}, and reduce the total computation by the distributive property of \emph{multiplication} over \emph{summation}. Since $u$ and $q$ are fixed in the generalized Bellman-Ford algorithm, we may abbreviate $\vh_q^{(t)}(u, v)$ as $\vh^{(t)}_v$ when the context is clear. When $\oplus = min$ and $\otimes = +$, it recovers the original Bellman-Ford algorithm for the shortest path problem~\cite{bellman1958routing}. See Appendix~\ref{app:bellman_ford} for preliminaries and the proof of the above algorithm.
\begin{restatable}{thm}{bellman}
    Katz index, personalized PageRank, graph distance, widest path and most reliable path can be solved via the generalized Bellman-Ford algorithm.
    \label{th:bellman-ford}
\end{restatable}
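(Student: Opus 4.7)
The plan is to combine Theorems \ref{thm:katz_index}--\ref{thm:reliable_path}, which already express each of the five methods as a path formulation of the form in Equations \ref{eqn:link}--\ref{eqn:path}, with the correctness of the generalized Bellman-Ford algorithm (to be proved in Appendix \ref{app:bellman_ford}). That correctness result requires the pair $\langle\oplus, \otimes\rangle$ to form a semiring with a summation identity $\ozero_q$ and a multiplication identity $\oone_q$, so the work of this theorem reduces to verifying, case by case, that the five operator pairs prescribed by Theorems \ref{thm:katz_index}--\ref{thm:reliable_path} satisfy the semiring axioms, and then invoking the algorithm.

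Concretely, I would enumerate the semirings: for Katz index and personalized PageRank, $\langle +, \times\rangle$ on $\mathbb{R}_{\geq 0}$ with $\ozero_q = 0$ and $\oone_q = 1$ (the usual arithmetic semiring); for graph distance, $\langle \min, +\rangle$ with $\ozero_q = +\infty$ and $\oone_q = 0$ (the tropical semiring); for widest path, $\langle \max, \min\rangle$ with $\ozero_q = -\infty$ and $\oone_q = +\infty$; for most reliable path, $\langle \max, \times\rangle$ on $[0,1]$ with $\ozero_q = 0$ and $\oone_q = 1$. In each case I would check that $\oplus$ is commutative and associative with identity $\ozero_q$, that $\otimes$ is associative with identity $\oone_q$ and annihilator $\ozero_q$, and that $\otimes$ distributes over $\oplus$; these are all standard verifications. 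Given the semiring property, Equations \ref{eqn:boundary}--\ref{eqn:iteration} are well-defined, and by the general result the iterates $\vh_q^{(t)}(u,v)$ converge to the generalized sum $\bigoplus_{P\in\gP_{uv}}\vh_q(P)$, which equals the target quantity by the corresponding Theorem \ref{thm:katz_index}--\ref{thm:reliable_path}.

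The main obstacle is convergence, since $\gP_{uv}$ is in general infinite once the graph contains cycles. For the idempotent semirings (graph distance, widest path, most reliable path on $[0,1]$) the $\oplus$ operator absorbs repeated contributions, so cycles never improve the aggregate and the iteration stabilizes in at most $|\gV|-1$ steps; this mirrors the classical argument for the shortest path algorithm. For Katz index and personalized PageRank, convergence is not automatic and requires the standard parameter conditions (e.g.\ $\beta$ strictly below the reciprocal spectral radius of the adjacency matrix, and damping factor $\alpha<1$, respectively), under which the Neumann series $\sum_t (\beta\mA)^t$ converges; I would make these assumptions explicit and then the generalized Bellman-Ford iteration coincides with the partial sums of this series, establishing the claim in the limit.
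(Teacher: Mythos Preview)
Your proposal is correct and follows essentially the same route as the paper: reduce the theorem to checking, case by case, that the operator pairs from Theorems~\ref{thm:katz_index}--\ref{thm:reliable_path} form semirings (with exactly the identities you list), and then invoke the general correctness of the generalized Bellman-Ford iteration established in Appendix~\ref{app:bellman_ford}. Your convergence discussion---separating the idempotent cases that stabilize in at most $|\gV|-1$ steps from the Katz/PPR cases that need the usual spectral/damping assumptions---is in fact more careful than the paper, which simply remarks that $|\gV|$ iterations suffice when restricting to simple paths and otherwise appeals to practice.
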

\vspace{-1em}
\begin{table}[!h]
    \centering
    \caption{Comparison of operators in \method and other methods from the view of path formulation.}
    \label{tab:semiring_instance}
    \begin{adjustbox}{max width=\textwidth}
        \begin{tabular}{llcccc}
            \toprule
            \multirow{2}{*}{\bf{Class}} & \multirow{2}{*}{\bf{Method}} & \bf{\textsc{Message}} & \bf{\textsc{Aggregate}} & \bf{\textsc{Indicator}} & \bf{Edge Representation} \\
            & & $\vw_q(e_i) \otimes \vw_q(e_j)$ & $\vh_q(P_i) \oplus \vh_q(P_j)$ & $\ozero_q$, $\oone_q$ & $\vw_q(e)$ \\
            \midrule
            \multirow{3}{*}{\bf{\shortstack[l]{Traditional\\Link\\Prediction}}}
            & Katz Index~\cite{katz1953new} & $\vw_q(e_i) \times \vw_q(e_j)$ & $\vh_q(P_i) + \vh_q(P_j)$ & $0, 1$ & $\beta w_e$ \\
            & Personalized PageRank~\cite{page1999pagerank} & $\vw_q(e_i) \times \vw_q(e_j)$ & $\vh_q(P_i) + \vh_q(P_j)$ & $0, 1$ & $\alpha w_{uv} / \sum_{v'\in\gN(u)}w_{uv'}$ \\
            & Graph Distance~\cite{liben2007link} & $\vw_q(e_i) + \vw_q(e_j)$ & $\min(\vh_q(P_i), \vh_q(P_j))$ & $+\infty, 0$ & $w_e$ \\
            \midrule
            \multirow{2}{*}{\bf{\shortstack[l]{Graph Theory\\Algorithms}}}
            & Widest Path~\cite{baras2010path} & $\min(\vw_q(e_i), \vw_q(e_j))$ & $\max(\vh_q(P_i), \vh_q(P_j))$ & $-\infty, +\infty$ & $w_e$ \\
            & Most Reliable Path~\cite{baras2010path} & $\vw_q(e_i) \times \vw_q(e_j)$ & $\max(\vh_q(P_i), \vh_q(P_j))$ & $0, 1$ & $w_e$ \\
            \midrule
            \multirow{2}{*}{\bf{Logic Rules}}
            & NeuralLP~\cite{yang2017differentiable} / & \multirow{2}{*}{$\vw_q(e_i) \times \vw_q(e_j)$} & \multirow{2}{*}{$\vh_q(P_i) + \vh_q(P_j)$} & \multirow{2}{*}{0, 1} & Weights learned \\
            & DRUM~\cite{sadeghian2019drum} & & & & by LSTM~\cite{hochreiter1997long} \\
            \midrule
            & \multirow{3}{*}{\method} & Relational operators of & \multirow{3}{*}{\shortstack[c]{Learned set\\aggregators~\cite{corso2020principal}}} & \multirow{3}{*}{\shortstack[c]{Learned indicator\\functions}} & \multirow{3}{*}{\shortstack[c]{Learned relation\\embeddings}} \\
            & & knowledge graph \\
            & & embeddings~\cite{bordes2013translating, yang2015embedding, sun2019rotate} \\
            \bottomrule
        \end{tabular}
    \end{adjustbox}
\end{table}

\subsection{Neural Bellman-Ford Networks}
\label{sec:nbfn}

While the generalized Bellman-Ford algorithm can solve many classical methods (Theorem~\ref{th:bellman-ford}), these methods instantiate the path formulation with handcrafted operators (Table~\ref{tab:semiring_instance}), and may not be optimal for link prediction. To improve the capacity of path formulation, we propose a general framework, Neural Bellman-Ford Networks (\method), to learn the operators in the pair representations.

\begin{wrapfigure}{R}{0.57\textwidth}
\begin{minipage}{0.57\textwidth}
    \vspace{-2.6em}
    \begin{algorithm}[H]
        \footnotesize
        \captionsetup{font=footnotesize}\caption{Neural Bellman-Ford Networks}
        \textbf{Input:} source node $u$, query relation $q$, \#layers $T$ \\
        \textbf{Output:} pair representations $\vh_q(u, v)$ for all $v \in \gV$
        \begin{algorithmic}[1]
            \For{$v \in \gV$} \Comment{Boundary condition}
                \State{$\vh_v^{(0)} \gets \Call{Indicator}{u, v, q}$}
            \EndFor
            \For{$t \gets 1$ to $T$} \Comment{Bellman-Ford iteration}
                \For{$v \in \gV$}
                    \State{$\gM^{(t)}_v \gets \left\{\vh_v^{(0)}\right\}$} \Comment{Message augmentation}
                    \For{$(x, r, v) \in \gE(v)$}
                        \State{$\vm_{(x, r, v)}^{(t)} \gets$ \Call{Message$^{(t)}$}{$\vh_x^{(t-1)}, \vw_q(x, r, v)$}}
                        \State{$\gM^{(t)}_v \gets \gM^{(t)}_v \cup \left\{\vm_{(x, r, v)}^{(t)}\right\}$}
                    \EndFor
                    \State{$\vh_v^{(t)} \gets$ \Call{Aggregate$^{(t)}$}{$\gM^{(t)}_v$}}
                \EndFor
            \EndFor
            \State{\Return{$\vh_v^{(T)}$ as $\vh_q(u, v)$ for all $v \in \gV$}}
        \end{algorithmic}
        \label{alg:framework}
    \end{algorithm}
    \vspace{-2.5em}
\end{minipage}
\end{wrapfigure}

\textbf{Neural Parameterization.} We relax the semiring assumption and parameterize the generalized Bellman-Ford algorithm (Equation~\ref{eqn:boundary} and \ref{eqn:iteration}) with 3 neural functions, namely \textsc{Indicator}, \textsc{Message} and \textsc{Aggregate} functions. The \textsc{Indicator} function replaces the \emph{indicator} function $\mathbbm{1}_q(u = v)$. The \textsc{Message} function replaces the binary \emph{multiplication} operator $\otimes$. The \textsc{Aggregate} function is a permutation invariant function over sets that replaces the n-ary \emph{summation} operator $\bigoplus$. Note that one may alternatively define \textsc{Aggregate} as the commutative binary operator $\oplus$ and apply it to a sequence of messages. However, this will make the parameterization more complicated.

Now consider the generalized Bellman-Ford algorithm for a given entity $u$ and relation $q$. In this context, we abbreviate $\vh_q^{(t)}(u, v)$ as $\vh^{(t)}_v$, i.e., a representation on entity $v$ in the $t$-th iteration. It should be stressed that $\vh^{(t)}_v$ is still a pair representation, rather than a node representation. By substituting the neural functions into Equation~\ref{eqn:boundary} and \ref{eqn:iteration}, we get our Neural Bellman-Ford Networks.
\begin{align}
    \vh^{(0)}_v &\leftarrow \textsc{Indicator}(u, v, q) \\
    \vh^{(t)}_v &\leftarrow \textsc{Aggregate}\left(\left\{\textsc{Message}\left(\vh^{(t-1)}_x, \vw_q(x, r, v)\right) \middle\vert (x, r, v) \in \gE(v)\right\} \cup \left\{\vh^{(0)}_v\right\}\right)
\end{align}
\method can be interpreted as a novel GNN framework for learning pair representations. Compared to common GNN frameworks~\cite{kipf2016variational, schlichtkrull2018modeling} that compute the pair representation as two independent node representations $\vh_q(u)$ and $\vh_q(v)$, \method initializes a representation on the source node $u$, and readouts the pair representation on the target node $v$. Intuitively, our framework can be viewed as a source-specific message passing process, where every node learns a representation conditioned on the source node. The pseudo code of \method is outlined in Algorithm~\ref{alg:framework}.

\textbf{Design Space.} Now we discuss some principled designs for \textsc{Message}, \textsc{Aggregate} and \textsc{Indicator} functions by drawing insights from traditional methods. Note the potential design space for \method is way larger than what is presented here, as one can always borrow \textsc{Message} and \textsc{Aggregate} from the arsenal of message-passing GNNs~\cite{hamilton2017inductive, gilmer2017neural, velivckovic2018graph, xu2018powerful}.

For the \textsc{Message} function, traditional methods instantiate it as natural summation, natural multiplication or min over scalars. Therefore, we may use the vectorized version of summation or multiplication. Intuitively, summation of $\vh^{(t-1)}_x$ and $\vw_q(x, r, v)$ can be interpreted as a translation of $\vh^{(t-1)}_x$ by $\vw_q(x, r, v)$ in the pair representation space, while multiplication corresponds to scaling. Such transformations correspond to the relational operators~\cite{hamilton2018embedding, ren2020query2box} in knowledge graph embeddings~\cite{bordes2013translating, yang2015embedding, trouillon2016complex, kazemi2018simple, sun2019rotate}. For example, translation and scaling are the relational operators used in TransE~\cite{bordes2013translating} and DistMult~\cite{yang2015embedding} respectively. We also consider the rotation operator in RotatE~\cite{sun2019rotate}.

The \textsc{Aggregate} function is instantiated as natural summation, max or min in traditional methods, which are reminiscent of set aggregation functions~\cite{zaheer2017deep, xu2018powerful, corso2020principal} used in GNNs. Therefore, we specify the \textsc{Aggregate} function to be sum, mean, or max, followed by a linear transformation and a non-linear activation. We also consider the principal neighborhood aggregation (PNA) proposed in a recent work~\cite{corso2020principal}, which jointly learns the types and scales of the aggregation function.

The \textsc{Indicator} function is aimed at providing a non-trivial representation for the source node $u$ as the boundary condition. Therefore, we learn a query embedding $\vq$ for $\oone_q$ and define \textsc{Indicator} function as $\mathbbm{1}(u = v) * \vq$. Note it is also possible to additionally learn an embedding for $\ozero_q$. However, we find a single query embedding works better in practice.

The edge representations are instantiated as transition probabilities or length in traditional methods. We notice that an edge may have different contribution in answering different query relations. Therefore, we parameterize the edge representations as a linear function over the query relation, i.e., $\vw_q(x, r, v) = \mW_r \vq + \vb_r$. For homogeneous graphs or knowledge graphs with very few relations, we simplify the parameterization to $\vw_q(x, r, v) = \vb_r$ to prevent overfitting. Note that one may also parameterize $\vw_q(x, r, v)$ with learnable entity embeddings $\vx$ and $\vv$, but such a parameterization cannot solve the inductive setting. Similar to NeuralLP~\cite{yang2017differentiable} \& DRUM~\cite{sadeghian2019drum}, we use different edge representations for different iterations, which is able to distinguish noncommutative edges in paths, e.g., father's mother v.s. mother's father.

\textbf{Link Prediction.} We now show how to apply the learned pair representations $\vh_q(u, v)$ to the link prediction problem. We predict the conditional likelihood of the tail entity $v$ as $p(v | u, q) = \sigma(f(\vh_q(u, v)))$, where $\sigma(\cdot)$ is the sigmoid function and $f(\cdot)$ is a feed-forward neural network. The conditional likelihood of the head entity $u$ can be predicted by $p(u|v, q^{-1}) = \sigma(f(\vh_{q^{-1}}(v, u)))$ with the same model. Following previous works~\cite{bordes2013translating, sun2019rotate}, we minimize the negative log-likelihood of positive and negative triplets (Equation~\ref{eqn:kg_loss}). The negative samples are generated according to Partial Completeness Assumption (PCA)~\cite{galarraga2013amie}, which corrupts one of the entities in a positive triplet to create a negative sample. For undirected graphs, we symmetrize the representations and define $p_q(u, v) = \sigma(f(\vh_q(u, v) + \vh_q(v, u)))$. Equation~\ref{eqn:homo_loss} shows the loss for homogeneous graphs.
\vspace{-0.5em}
\begin{align}
    &\gL_{KG} = -\log p(u, q, v) - \sum_{i=1}^n \frac{1}{n}\log (1 - p(u_i', q, v_i'))
    \label{eqn:kg_loss} \\
    &\gL_{homo} = -\log p(u, v) -
    \sum_{i=1}^n \frac{1}{n}\log (1 - p(u_i', v_i')),
    \label{eqn:homo_loss}
\end{align}
where $n$ is the number of negative samples per positive sample and $(u_i',q,v_i')$ and $(u_i',v_i')$ are the $i$-th negative samples for knowledge graphs and homogeneous graphs, respectively.

\textbf{Time Complexity.} One advantage of \method is that it has a relatively low time complexity during inference\footnote{Although the same analysis can be applied to training on a fixed number of samples, we note it is less instructive since one can trade-off samples for performance, and the trade-off varies from method to method.}.
Consider a scenario where a model is required to infer the conditional likelihood of all possible triplets $p(v | u, q)$. We group triplets with the same condition $u, q$ together, where each group contains $|\gV|$ triplets. For each group, we only need to execute Algorithm~\ref{alg:framework} once to get their predictions. Since a small constant number of iterations $T$ is enough for \method to converge (Table~\ref{tab:num_layer}), Algorithm~\ref{alg:framework} has a time complexity of $O(|\gE|d + |\gV|d^2)$, where $d$ is the dimension of representations. Therefore, the amortized time complexity for a single triplet is $O\left(\frac{|\gE|d}{|\gV|} + d^2\right)$. For a detailed derivation of time complexity of other GNN frameworks, please refer to Appendix~\ref{app:complexity}.
\section{Experiment}

\subsection{Experiment Setup}
\label{sec:exp_setup}

We evaluate \method in three settings, knowledge graph completion, homogeneous graph link prediction and inductive relation prediction. The former two are transductive settings, while the last is an inductive setting. For knowledge graphs, we use FB15k-237~\cite{toutanova2015observed} and WN18RR~\cite{dettmers2018convolutional}. We use the standard transductive splits~\cite{toutanova2015observed, dettmers2018convolutional} and inductive splits~\cite{teru2020inductive} of these datasets. For homogeneous graphs, we use Cora, Citeseer and PubMed~\cite{sen2008collective}. Following previous works~\cite{kipf2016variational, davidson2018hyperspherical}, we split the edges into train/valid/test with a ratio of 85:5:10. Statistics of datasets can be found in Appendix~\ref{app:dataset_stat}. Additional experiments of \method on OGB~\cite{hu2020ogb} datasets can be found in Appendix~\ref{app:ogb}.

\textbf{Implementation Details.}
Our implementation generally follows the open source codebases of knowledge graph completion\footnote{\url{https://github.com/DeepGraphLearning/KnowledgeGraphEmbedding}. MIT license.\label{fn:kg_url}} and homogeneous graph link prediction\footnote{\url{https://github.com/tkipf/gae}. MIT license.\label{fn:homo_url}}. For knowledge graphs, we follow \cite{yang2017differentiable, sadeghian2019drum} and augment each triplet \edge{u, q, v} with a flipped triplet \edge{v, q$^{-1}$, u}. For homogeneous graphs, we follow \cite{kipf2016semi, kipf2016variational} and augment each node $u$ with a self loop \edge{u, u}. We instantiate \method with 6 layers, each with 32 hidden units. The feed-forward network $f(\cdot)$ is set to a 2-layer MLP with 64 hidden units. ReLU is used as the activation function for all hidden layers. We drop out edges that directly connect query node pairs during training to encourage the model to capture longer paths and prevent overfitting. Our model is trained on 4 Tesla V100 GPUs for 20 epochs. We select the models based on their performance on the validation set. See Appendix~\ref{app:implementation} for more details.

\textbf{Evaluation.} We follow the filtered ranking protocol~\cite{bordes2013translating} for knowledge graph completion. For a test triplet \edge{u, q, v}, we rank it against all negative triplets \edge{u, q, v'} or \edge{u', q, v} that do not appear in the knowledge graph. We report mean rank (MR), mean reciprocal rank (MRR) and HITS at N (H@N) for knowledge graph completion. For inductive relation prediction, we follow~\cite{teru2020inductive} and draw 50 negative triplets for each positive triplet and use the above filtered ranking. We report HITS@10 for inductive relation prediction. For homogeneous graph link prediction, we follow~\cite{kipf2016variational} and compare the positive edges against the same number of negative edges. We report area under the receiver operating characteristic curve (AUROC) and average precision (AP) for homogeneous graphs.

\textbf{Baselines.} We compare \method against path-based methods, embedding methods, and GNNs. These include 11 baselines for knowledge graph completion, 10 baselines for homogeneous graph link prediction and 4 baselines for inductive relation prediction. Note the inductive setting only includes path-based methods and GNNs, since existing embedding methods cannot handle this setting.

\subsection{Main Results}

Table~\ref{tab:kg_result} summarizes the results on knowledge graph completion. \method significantly outperforms existing methods on all metrics and both datasets. \method achieves an average relative gain of 21\% in HITS@1 compared to the best path-based method, DRUM~\cite{sadeghian2019drum}, on two datasets. Since DRUM is a special instance of \method with natural summation and multiplication operators, this indicates the importance of learning \textsc{Message} and \textsc{Aggregate} functions in \method. \method also outperforms the best embedding method, LowFER~\cite{amin2020lowfer}, with an average relative performance gain of 18\% in HITS@1 on two datasets. Meanwhile, \method requires much less parameters than embedding methods. \method only uses 3M parameters on FB15k-237, while TransE needs 30M parameters. See Appendix~\ref{app:num_param} for details on the number of parameters.

\begin{table}[!h]
    \centering
    \caption{Knowledge graph completion results. Results of NeuraLP and DRUM are taken from \cite{sadeghian2019drum}. Results of RotatE, HAKE and LowFER are taken from their original papers~\cite{sun2019rotate, zhang2020learning, amin2020lowfer}. Results of the other embedding methods are taken from \cite{sun2019rotate}. Since GraIL has scalability issues in this setting, we evaluate it with 50 and 100 negative triplets for FB15k-237 and WN18RR respectively and report MR based on an unbiased estimation.}
    \label{tab:kg_result}
    \begin{adjustbox}{max width=\textwidth}
        \begin{tabular}{llcccccccccc}
            \toprule
            \multirow{2}{*}{\bf{Class}} & \multirow{2}{*}{\bf{Method}}
            & \multicolumn{5}{c}{\bf{FB15k-237}} & \multicolumn{5}{c}{\bf{WN18RR}} \\
            & & \bf{MR} & \bf{MRR} & \bf{H@1} & \bf{H@3} & \bf{H@10} & \bf{MR} & \bf{MRR} & \bf{H@1} & \bf{H@3} & \bf{H@10} \\
            \midrule
            \multirow{3}{*}{\bf{Path-based}}
            & Path Ranking~\cite{lao2010relational} & 3521 & 0.174 & 0.119 & 0.186 & 0.285 & 22438 & 0.324 & 0.276 & 0.360 & 0.406 \\
            & NeuralLP~\cite{yang2017differentiable} & - & 0.240 & - & - & 0.362 & - & 0.435 & 0.371 & 0.434 & 0.566 \\
            & DRUM~\cite{sadeghian2019drum} & - & 0.343 & 0.255 & 0.378 & 0.516 & - & 0.486 & 0.425 & 0.513 & 0.586 \\
            \midrule
            \multirow{6}{*}{\bf{Embeddings}}
            & TransE~\cite{bordes2013translating} & 357 & 0.294 & - & - & 0.465 & 3384 & 0.226 & - & - & 0.501 \\
            & DistMult~\cite{yang2015embedding} & 254 & 0.241 & 0.155 & 0.263 & 0.419 & 5110 & 0.43 & 0.39 & 0.44 & 0.49 \\
            & ComplEx~\cite{trouillon2016complex} & 339 & 0.247 & 0.158 & 0.275 & 0.428 & 5261 & 0.44 & 0.41 & 0.46 & 0.51 \\
            & RotatE~\cite{sun2019rotate} & 177 & 0.338 & 0.241 & 0.375 & 0.553 & 3340 & 0.476 & 0.428 & 0.492 & 0.571 \\
            & HAKE~\cite{zhang2020learning} & - & 0.346 & 0.250 & 0.381 & 0.542 & - & 0.497 & 0.452 & 0.516 & 0.582 \\
            & LowFER~\cite{amin2020lowfer} & - & 0.359 & 0.266 & 0.396 & 0.544 & - & 0.465 & 0.434 & 0.479 & 0.526 \\
            \midrule
            \multirow{3}{*}{\bf{GNNs}}
            & RGCN~\cite{schlichtkrull2018modeling} & 221 & 0.273 & 0.182 & 0.303 & 0.456 & 2719 & 0.402 & 0.345 & 0.437 & 0.494 \\
            & GraIL~\cite{teru2020inductive} & 2053 & - & - & - & - & 2539 & - & - & - & - \\
            & \method & \best{114} & \best{0.415} & \best{0.321} & \best{0.454} & \best{0.599} & \best{636} & \best{0.551} & \best{0.497} & \best{0.573} & \best{0.666} \\
            \bottomrule
        \end{tabular}
    \end{adjustbox}
\end{table}

\begin{table}[!h]
    \vspace{-1em}
    \centering
    \caption{Homogeneous graph link prediction results. Results of VGAE and S-VGAE are taken from their original papers~\cite{kipf2016variational, davidson2018hyperspherical}.}
    \label{tab:homo_result}
    \begin{adjustbox}{max width=\textwidth}
        \begin{tabular}{llcccccc}
            \toprule
            \multirow{2}{*}{\bf{Class}} & \multirow{2}{*}{\bf{Method}}
                    & \multicolumn{2}{c}{\bf{Cora}} & \multicolumn{2}{c}{\bf{Citeseer}} & \multicolumn{2}{c}{\bf{PubMed}} \\
                &   & \bf{AUROC} & \bf{AP} & \bf{AUROC} & \bf{AP} & \bf{AUROC} & \bf{AP} \\
            \midrule
            \multirow{3}{*}{\bf{Path-based}}
            & Katz Index~\cite{katz1953new} & 0.834 & 0.889 & 0.768 & 0.810 & 0.757 & 0.856 \\
            & Personalized PageRank~\cite{page1999pagerank} & 0.845 & 0.899 & 0.762 & 0.814 & 0.763 & 0.860 \\
            & SimRank~\cite{jeh2002simrank} & 0.838 & 0.888 & 0.755 & 0.805 & 0.743 & 0.829 \\
            \midrule
            \multirow{3}{*}{\bf{Embeddings}}
            & DeepWalk~\cite{perozzi2014deepwalk} & 0.831 & 0.850 & 0.805 & 0.836 & 0.844 & 0.841 \\
            & LINE~\cite{tang2015line} & 0.844 & 0.876 & 0.791 & 0.826 & 0.849 & 0.888 \\
            & node2vec~\cite{grover2016node2vec} & 0.872 & 0.879 & 0.838 & 0.868 & 0.891 & 0.914 \\
            \midrule
            \multirow{5}{*}{\bf{GNNs}}
            & VGAE~\cite{kipf2016variational} & 0.914 & 0.926 & 0.908 & 0.920 & 0.944 & 0.947 \\
            & S-VGAE~\cite{davidson2018hyperspherical} & 0.941 & 0.941 & \best{0.947} & \best{0.952} & 0.960 & 0.960 \\
            & SEAL~\cite{zhang2018link} & 0.933 & 0.942 & 0.905 & 0.924 & 0.978 & 0.979 \\
            & TLC-GNN~\cite{yan2021link} & 0.934 & 0.931 & 0.909 & 0.916 & 0.970 & 0.968 \\
            & \method             & \best{0.956} & \best{0.962} & 0.923 & 0.936 & \best{0.983} & \best{0.982} \\
            \bottomrule
        \end{tabular}
    \end{adjustbox}
\end{table}

\begin{table}[!h]
    \vspace{-1em}
    \centering
    \caption{Inductive relation prediction results (HITS@10). V1-v4 corresponds to the 4 standard versions of inductive splits. Results of compared methods are taken from \cite{teru2020inductive}.}
    \label{tab:inductive_result}
    \footnotesize
    \begin{tabular}{llcccccccc}
        \toprule
        \multirow{2}{*}{\bf{Class}} & \multirow{2}{*}{\bf{Method}} & \multicolumn{4}{c}{\bf{FB15k-237}} & \multicolumn{4}{c}{\bf{WN18RR}} \\
        & & \bf{v1} & \bf{v2} & \bf{v3} & \bf{v4} & \bf{v1} & \bf{v2} & \bf{v3} & \bf{v4} \\
        \midrule
        \multirow{3}{*}{\bf{Path-based}}
        & NeuralLP~\cite{gilmer2017neural} & 0.529 & 0.589 & 0.529 & 0.559 & 0.744 & 0.689 & 0.462 & 0.671 \\
        & DRUM~\cite{sadeghian2019drum} & 0.529 & 0.587 & 0.529 & 0.559 & 0.744 & 0.689 & 0.462 & 0.671 \\
        & RuleN~\cite{meilicke2018fine} & 0.498 & 0.778 & 0.877 & 0.856 & 0.809 & 0.782 & 0.534 & 0.716 \\
        \midrule
        \multirow{2}{*}{\bf{GNNs}}
        & GraIL~\cite{teru2020inductive} & 0.642 & 0.818 & 0.828 & 0.893 & 0.825 & 0.787 & 0.584 & 0.734 \\
        & \method & \best{0.834} & \best{0.949} & \best{0.951} & \best{0.960} & \best{0.948} & \best{0.905} & \best{0.893} & \best{0.890} \\
        \bottomrule
    \end{tabular}
    \vspace{-1em}
\end{table}

Table~\ref{tab:homo_result} shows the results on homogeneous graph link prediction. \method gets the best results on Cora and PubMed, meanwhile achieves competitive results on CiteSeer. Note CiteSeer is extremely sparse (Appendix~\ref{app:dataset_stat}), which makes it hard to learn good representations with NBFNet. One thing to note here is that unlike other GNN methods, \method does not use the node features provided by the datasets but is still able to outperform most other methods. We leave how to effectively combine node features and structural representations for link prediction as our future work.

Table~\ref{tab:inductive_result} summarizes the results on inductive relation prediction. On all inductive splits of two datasets, \method achieves the best result. \method outperforms the previous best method, GraIL~\cite{teru2020inductive}, with an average relative performance gain of 22\% in HITS@10. Note that GraIL explicitly encodes the local subgraph surrounding each node pair and has a high time complexity (Appendix~\ref{app:complexity}). Usually, GraIL can at most encode a 2-hop subgraph, while our \method can efficiently explore longer paths. 

\subsection{Ablation Study}

\textbf{\textsc{Message} \& \textsc{Aggregate} Functions.} Table~\ref{tab:msg_agg} shows the results of different \textsc{Message} and \textsc{Aggregate} functions. Generally, \method benefits from advanced embedding methods (DistMult, RotatE > TransE) and aggregation functions (PNA > sum, mean, max). Among simple \textsc{Aggregate} functions (sum, mean, max), combinations of \textsc{Message} and \textsc{Aggregate} functions (TransE \& max, DistMult \& sum) that satisfy the semiring assumption\footnote{Here semiring is discussed under the assumption of linear activation functions. Rigorously, no combination satisfies a semiring if we consider non-linearity in the model.} of the generalized Bellman-Ford algorithm, achieve locally optimal performance. PNA significantly improves over simple counterparts, which highlights the importance of learning more powerful \textsc{Aggregate} functions.

\textbf{Number of GNN Layers.} Table~\ref{tab:num_layer} compares the results of \method with different number of layers. Although it has been reported that GNNs with deep layers often result in significant performance drop~\cite{li2018deeper, zhao2019pairnorm}, we observe \method does not have this issue. The performance increases monotonically with more layers, hitting a saturation after 6 layers. We conjecture the reason is that longer paths have negligible contribution, and paths not longer than 6 are enough for link prediction.

\textbf{Performance by Relation Category.} We break down the performance of \method by the categories of query relations: one-to-one, one-to-many, many-to-one and many-to-many\footnote{The categories are defined same as \cite{wang2014knowledge}. We compute the average number of tails per head and the average number of heads per tail. The category is \textit{one} if the average number is smaller than 1.5 and \textit{many} otherwise.}. Table~\ref{tab:rel_category} shows the prediction results for each category. It is observed that \method not only improves on easy one-to-one cases, but also on hard cases where there are multiple true answers for the query.

\vspace{-1em}
\begin{table}[!h]
    \caption{Ablation studies of \method on FB15k-237. Due to space constraints, we only report MRR here. For full results on all metrics, please refer to Appendix~\ref{app:ablation}.}
    \label{tab:ablation}
    \footnotesize
    \begin{subtable}[t]{0.52\textwidth}
        \centering
        \caption{Different \textsc{Message} and \textsc{Aggregate} functions. \label{tab:msg_agg}}
        \vspace{-0.5em}
        \begin{adjustbox}{max width=\textwidth}
            \begin{tabular}{lcccc}
                \toprule
                \multirow{2}{*}{\bf{\textsc{Message}}} & \multicolumn{4}{c}{\bf{\textsc{Aggregate}}} \\
                                                  & Sum & Mean & Max & PNA~\cite{corso2020principal} \\
                \midrule
                TransE~\cite{bordes2013translating} & 0.297 & 0.310 & 0.377 & 0.383 \\
                DistMult~\cite{yang2017differentiable} & 0.388 & 0.384 & 0.374 & \bf{0.415} \\
                RotatE~\cite{sun2019rotate}   & 0.392 & 0.376 & 0.385 & \bf{0.414} \\
                \bottomrule
            \end{tabular}
        \end{adjustbox}
    \end{subtable}
    \hspace{0.3em}
    \begin{subtable}[t]{0.46\textwidth}
        \centering
        \caption{Different number of layers. \label{tab:num_layer}}
        \vspace{-0.5em}
        \begin{adjustbox}{max width=\textwidth}
            \begin{tabular}{lcccc}
                \toprule
                \multirow{2}{*}{\bf{Method}} & \multicolumn{4}{c}{\bf{\#Layers ($T$)}} \\
                & 2 & 4 & 6 & 8 \\
                \midrule
                \method & 0.345 & 0.409 & \bf{0.415} & \bf{0.416} \\
                \bottomrule
            \end{tabular}
        \end{adjustbox}
    \end{subtable}
    \centering
    \begin{subtable}[h]{\textwidth}
        \centering
        \caption{Performance w.r.t. relation category. The two scores are the rankings over heads and tails respectively. \label{tab:rel_category}}
        \vspace{-0.5em}
        \begin{adjustbox}{max width=0.8\textwidth}
            \begin{tabular}{lcccc}
                \toprule
                \multirow{2}{*}{\bf{Method}} & \multicolumn{4}{c}{\bf{Relation Category}} \\
                & \bf{1-to-1} & \bf{1-to-N} & \bf{N-to-1} & \bf{N-to-N} \\
                \midrule
                TransE~\cite{bordes2013translating} & 0.498/0.488 & 0.455/0.071 & 0.079/0.744 & 0.224/0.330 \\
                RotatE~\cite{sun2011pathsim} & 0.487/0.484 & 0.467/0.070 & 0.081/0.747 & 0.234/0.338 \\
                \method & \best{0.578}/\best{0.600} & \best{0.499}/\best{0.122} & \best{0.165}/\best{0.790} & \best{0.348}/\best{0.456} \\
                \bottomrule
            \end{tabular}
        \end{adjustbox}
    \end{subtable}
\end{table}

\subsection{Path Interpretations of Predictions}

One advantage of \method is that we can interpret its predictions through paths, which may be important for users to understand and debug the model. Intuitively, the interpretations should contain paths that contribute most to the prediction $p(u, q, v)$. Following local interpretation methods~\cite{baehrens2010explain, zeiler2014visualizing}, we approximate the local landscape of \method with a linear model over the set of all paths, i.e., 1st-order Taylor polynomial. We define the importance of a path as its weight in the linear model, which can be computed by the partial derivative of the prediction w.r.t.\ the path. Formally, the top-k path interpretations for $p(u, q, v)$ are defined as
\begin{equation}
    P_1, P_2, ..., P_k = \topk_{P \in \gP_{uv}} \frac{\partial{p(u, q, v)}}{\partial{P}}
\end{equation}
Note this formulation generalizes the definition of logical rules~\cite{yang2017differentiable, sadeghian2019drum} to non-linear models. While directly computing the importance of all paths is intractable, we approximate them with edge importance. Specifically, the importance of each path is approximated by the sum of the importance of edges in that path, where edge importance is obtained via auto differentiation. Then the top-k path interpretations are equivalent to the top-k longest paths on the edge importance graph, which can be solved by a Bellman-Ford-style beam search. Better approximation is left as a future work.

Table~\ref{tab:visualization} visualizes path interpretations from FB15k-237 test set. While users may have different insights towards the visualization, here is our understanding. 1) In the first example, \method learns soft logical entailment, such as $\emph{impersonate}^{-1} \land \emph{nationality} \implies \emph{nationality}$ and $\emph{ethnicity}^{-1} \land \emph{distribution} \implies \emph{nationality}$. 2) In second example, \method performs analogical reasoning by leveraging the fact that \emph{Florence} is similar to \emph{Rome}. 3) In the last example, \method extracts longer paths, since there is no obvious connection between \emph{Pearl Harbor (film)} and \emph{Japanese language}.

\begin{table}[!h]
    \vspace{-0.5em}
    \centering
    \caption{Path interpretations of predictions on FB15k-237 test set. For each query triplet, we visualize the top-2 path interpretations and their weights. Inverse relations are denoted with a superscript $^{-1}$.}
    \label{tab:visualization}
    \begin{adjustbox}{max width=\textwidth}
        \footnotesize
        \begin{tabular}{ll}
            \toprule
            \bf{Query} & \edge{u, q, v}: \edge{O. Hardy, nationality, U.S.} \\
            \midrule
            0.243 & \edge{O. Hardy, impersonate$^{-1}$, R. Little} $\land$ \edge{R. Little, nationality, U.S.} \\
            0.224 & \edge{O. Hardy, ethnicity$^{-1}$, Scottish American} $\land$ \edge{Scottish American, distribution, U.S.} \\
            \midrule
            \bf{Query} & \edge{u, q, v}: \edge{Florence, vacationer, D.C. Henrie} \\
            \midrule
            0.251 & \edge{Florence, contain$^{-1}$, Italy} $\land$ \edge{Italy, capital, Rome} $\land$ \edge{Rome, vacationer, D.C. Henrie} \\
            0.183 & \edge{Florence, place live$^{-1}$, G.F. Handel} $\land$ \edge{G.F. Handel, place live, Rome} $\land$ \edge{Rome, vacationer, D.C. Henrie} \\
            \midrule
            \bf{Query} & \edge{u, q, v}: \edge{Pearl Harbor (film), language, Japanese} \\
            \midrule
            0.211 & \edge{Pearl Harbor (film), film actor, C.-H. Tagawa} $\land$ \edge{C.-H. Tagawa, nationality, Japan} \\
            & $\land$ \edge{Japan, country of origin, Yu-Gi-Oh!} $\land$ \edge{Yu-Gi-Oh!, language, Japanese} \\
            0.208 & \edge{Pearl Harbor (film), film actor, C.-H. Tagawa} $\land$ \edge{C.-H. Tagawa, nationality, Japan} \\
            & $\land$ \edge{Japan, official language, Japanese} \\
            \bottomrule
        \end{tabular}
    \end{adjustbox}
\end{table}

\section{Discussion and Conclusion}
\label{sec:discussion}

\textbf{Limitations.} There are a few limitations for \method. First, the assumption of the generalized Bellman-Ford algorithm requires the operators $\langle\oplus, \otimes\rangle$ to satisfy a semiring. Due to the non-linear activation functions in neural networks, this assumption does not hold for \method, and we do not have a theoretical guarantee on the loss incurred by this relaxation. Second, \method is only verified on simple edge prediction, while there are other link prediction variants, e.g., complex logical queries with conjunctions ($\land$) and disjunctions ($\lor$)~\cite{hamilton2018embedding, ren2020query2box}. In the future, we would like to how \method approximates the path formulation, as well as apply \method to other link prediction settings.

\textbf{Social Impacts.} Link prediction has a wide range of beneficial applications, including recommender systems, knowledge graph completion and drug repurposing. However, there are also some potentially negative impacts. First, \method may encode the bias present in the training data, which leads to stereotyped predictions when the prediction is applied to a user on a social or e-commerce platform. Second, some harmful network activities could be augmented by powerful link prediction models, e.g., spamming, phishing, and social engineering. We expect future studies will mitigate these issues.

\textbf{Conclusion.} We present a representation learning framework based on paths for link prediction. Our path formulation generalizes several traditional methods, and can be efficiently solved via the generalized Bellman-Ford algorithm. To improve the capacity of the path formulation, we propose \method, which parameterizes the generalized Bellman-Ford algorithm with learned \textsc{Indicator}, \textsc{Message}, \textsc{Aggregate} functions. Experiments on knowledge graphs and homogeneous graphs show that \method outperforms a wide range of methods in both transductive and inductive settings.
\section*{Acknowledgements}

We would like to thank Komal Teru for discussion on inductive relation prediction, Guyue Huang for discussion on fused message passing implementation, and Yao Lu for assistance on large-scale GPU training. We thank Meng Qu, Chence Shi and Minghao Xu for providing feedback on our manuscript.

This project is supported by the Natural Sciences and Engineering Research Council (NSERC) Discovery Grant, the Canada CIFAR AI Chair Program, collaboration grants between Microsoft Research and Mila, Samsung Electronics Co., Ltd., Amazon Faculty Research Award, Tencent AI Lab Rhino-Bird Gift Fund and a NRC Collaborative R\&D Project (AI4D-CORE-06). This project was also partially funded by IVADO Fundamental Research Project grant PRF-2019-3583139727. The computation resource of this project is supported by Calcul Qu\'ebec\footnote{\url{https://www.calculquebec.ca/}} and Compute Canada\footnote{\url{https://www.computecanada.ca/}}.

\bibliographystyle{plain}
\bibliography{reference}

\clearpage
\appendix

\section{Path Formulations for Traditional Methods}
\label{app:path_formulation}

Here we demonstrate our path formulation is capable of modeling traditional link prediction methods like Katz index~\cite{katz1953new}, personalized PageRank~\cite{page1999pagerank} and graph distance~\cite{liben2007link}, as well as graph theory algorithms like widest path~\cite{baras2010path} and most reliable path~\cite{baras2010path}.

Recall the path formulation is defined as
\begin{align}
    \link \\
    \path
\end{align}
which can be written in the following compact form
\begin{equation}
    \vh_q(u,v) = \bigoplus_{P\in\gP_{uv}} \bigotimes_{i=1}^{|P|} \vw_q(e_i)
    \label{eqn:compact}
\end{equation}

\subsection{Katz Index}
The Katz index for a pair of nodes $u$, $v$ is defined as a weighted count of paths between $u$ and $v$, penalized by an attenuation factor $\beta\in(0,1)$. Formally, it can be written as
\begin{equation}
    \text{Katz}(u, v)
    =\sum_{t=1}^{\infty} \beta^t \ve_u^\top \mA^t \ve_v
\end{equation}
where $\mA$ denotes the adjacency matrix and $\ve_u$, $\ve_v$ denote the one-hot vector for nodes $u$, $v$ respectively. The term $\ve_u^\top \mA^t \ve_v$ counts all paths of length $t$ between $u$, and $v$ and shorter paths are assigned with larger weights.

\katz*
\begin{proof}
We show that $\text{Katz}(u, v)$ can be transformed into a summation over all paths between $u$ and $v$, where each path is represented by a product of damped edge weights in the path.
Mathematically, it can be derived as
\begin{align}
    \text{Katz}(u, v)
    &=\sum_{t=1}^{\infty} \beta^t \sum_{P \in \gP_{uv}:|P|=t}\prod_{e \in P} w_{e}\\
    &=\sum_{P \in \gP_{uv}}\prod_{e \in P} \beta w_{e}
\end{align}
Therefore, the Katz index can be viewed as a path formulation with the \emph{summation} operator $+$, the \emph{multiplication} operator $\times$ and the edge representations $\beta w_e$.
\end{proof}

\subsection{Personalized PageRank}
The personalized PageRank (PPR) for $u$ computes the stationary distribution over nodes generated by an infinite random walker, where the walker moves to a neighbor node with probability $\alpha$ and returns to the source node $u$ with probability $1-\alpha$ at each step. The probability of a node $v$ from a source node $u$ has the following closed-form solution~\cite{jeh2003scaling}
\begin{equation}
    \text{PPR}(u, v)
    =(1-\alpha)\sum_{t=1}^{\infty} \alpha^t \ve_u^\top (\mD^{-1}\mA)^t \ve_v
\end{equation}
where $\mD$ is the degree matrix and $\mD^{-1}\mA$ is the (random walk) normalized adjacency matrix.
Note that $\ve_u^\top (\mD^{-1}\mA)^t \ve_v$ computes the probability of $t$-step random walks from $u$ to $v$.

\pagerank*
\begin{proof}
We omit the coefficient $1-\alpha$, since it is always positive and has no effect on the ranking of different node pairs.
Then we have 
\begin{align}
    \text{PPR}(u, v)
    &\propto\sum_{t=1}^{\infty} \alpha^t \sum_{P \in \gP_{uv}:|P|=t}\prod_{(a,b) \in P} \frac{w_{ab}}{\sum_{b'\in\gN(a)}w_{ab'}} \\
    &=\sum_{P \in \gP_{uv}}\prod_{(a,b) \in P} \frac{\alpha w_{ab}}{\sum_{b'\in\gN(a)}w_{ab'}}
\end{align}
where the \emph{summation} operator is $+$, the \emph{multiplication} operator is $\times$ and edge representations are random walk probabilities scaled by $\alpha$.
\end{proof}

\subsection{Graph Distance}
Graph distance (GD) is defined as the minimum length of all paths between $u$ and $v$.

\distance*
\begin{proof}
Since the length of a path is the sum of edge lengths in the path, we have
\begin{equation}
    \text{GD}(u, v) = \min_{P \in \gP_{uv}} \sum_{e \in P} w_e
    \label{eqn:shortest_path}
\end{equation}
Here the \emph{summation} operator is $\min$, the \emph{multiplication} operator is $+$ and the edge representations are the lengths of edges.
\end{proof}

\subsection{Widest Path}
The widest path (WP), also known as the maximum capacity path, is aimed at finding a path between two given nodes, such that the path maximizes the minimum edge weight in the path.

\widest*
\begin{proof}
Given two nodes $u$ and $v$, we can write the widest path as
\begin{align}
    \text{WP}(u, v) =\max_{P \in \gP_{uv}} \min_{e \in P} w_e
\end{align}
Here the \emph{summation} operator is $\max$, the \emph{multiplication} operator is $\min$ and the edge representations are plain edge weights.
\end{proof}

\subsection{Most Reliable Path}
For a graph with non-negative edge probabilities, the most reliable path (MRP) is the path with maximal probability from a start node to an end node. This is also known as Viterbi algorithm~\cite{viterbi1967error} used in the maximum a posterior (MAP) inference of hidden Markov models (HMM).

\reliable*
\begin{proof}
For a start node $u$ and an end node $v$, the probaility of their most reliable path is 
\begin{align}
    \text{MRP}(u, v)
    =\max_{P \in \gP_{uv}} \prod_{e \in P} w_e
\end{align}
Here the \emph{summation} operator is $\max$, the \emph{multiplication} operator is $\times$ and the edge representations are edge probabilities.
\end{proof}

\section{Generalized Bellman-Ford Algorithm}
\label{app:bellman_ford}

First, we prove that the path formulation can be efficiently solved by the generalized Bellman-Ford algorithm when the operators $\langle\oplus, \otimes\rangle$ satisfy a semiring.
Then, we show that traditional methods satisfy the semiring assumption and therefore can be solved by the generalized Bellman-Ford algorithm.

\subsection{Preliminaries on Semirings}

Semirings are algebraic structures with two operators, \emph{summation} $\oplus$ and \emph{multiplication} $\otimes$, that share similar properties with the natural summation and the natural multiplication defined on integers. Specifically, $\oplus$ should be commutative, associative and have an identity element $\ozero$. $\otimes$ should be associative and have an identity element $\oone$. Mathematically, the \emph{summation} $\oplus$ satisfies
\begin{itemize}
    \setlength{\parskip}{0pt}
    \setlength{\itemsep}{0pt plus 1pt}
    \item \textbf{Commutative Property.} $a \oplus b = b \oplus a$
    \item \textbf{Associative Property.} $(a \oplus b) \oplus c = a \oplus (b \oplus c)$
    \item \textbf{Identity Element.} $a \oplus \ozero = a$
\end{itemize}
The \emph{multiplication} $\otimes$ satisfies
\begin{itemize}
    \setlength{\parskip}{0pt}
    \setlength{\itemsep}{0pt plus 1pt}
    \item \textbf{Associative Property.} $(a \otimes b) \otimes c = a \otimes (b \otimes c)$
    \item \textbf{Absorption Property.} $a \otimes \ozero = \ozero \otimes a = \ozero$
    \item \textbf{Identity Element.} $a \otimes \oone = \oone \otimes a = a$
\end{itemize}
Additionally, $\otimes$ should be distributive over $\oplus$.
\begin{itemize}
    \setlength{\parskip}{0pt}
    \setlength{\itemsep}{0pt plus 1pt}
    \item \textbf{Distributive Property (Left).} $a \otimes (b \oplus c) = (a \otimes b) \oplus (a \otimes c)$
    \item \textbf{Distributive Property (Right).} $(b \oplus c) \otimes a = (b \otimes a) \oplus (c \otimes a)$
\end{itemize}
Note semirings differ from natural arithmetic operators in two aspects. First, the \emph{summation} operator $\oplus$ does not need to be invertible, e.g., min or max. Second, the \emph{multiplication} operator $\otimes$ does not need to be commutative nor invertible, e.g., matrix multiplication.

\subsection{Generalized Bellman-Ford Algorithm for Path Formulation}
Now we prove that the generalized Bellman-Ford algorithm computes the path formulation when the operators $\langle\oplus, \otimes\rangle$ satisfy a semiring. It should be stressed that the generalized Bellman-Ford algorithm for path problems has been proved in~\cite{baras2010path}, and not a contribution of this paper. Here we apply the proof to our proposed path formulation.

The generalized Bellman-Ford algorithm computes the following iterations for all $v \in \gV$
\begin{align}
    \boundary \\
    \iteration
\end{align}
\begin{lemma}\label{lem:induction}
After $t$ Bellman-Ford iterations, the intermediate representation $\vh^{(t)}_q(u,v)$ aggregates all path representations within a length of $t$ edges for all $v$. That is
\begin{equation}
    \vh_q^{(t)}(u,v) = \bigoplus_{P\in\gP_{uv}:|P|\le t} \bigotimes_{i=1}^{|P|} \vw_q(e_i)
\end{equation}
\end{lemma}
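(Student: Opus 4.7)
The plan is to prove the statement by induction on $t$, leveraging the semiring axioms (distributivity, associativity, and identity elements) to manipulate the Bellman-Ford recursion into the claimed sum-of-paths form.

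For the base case $t = 0$, the boundary condition gives $\vh^{(0)}_q(u,v) = \mathbbm{1}_q(u = v)$, which equals $\oone_q$ when $u = v$ and $\ozero_q$ otherwise. I would match this against the right-hand side by observing that the only path of length $\le 0$ from $u$ to itself is the empty path, whose generalized product (the empty product) equals the multiplicative identity $\oone_q$; when $u \ne v$ the set of such paths is empty, so the generalized sum collapses to the additive identity $\ozero_q$.

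For the inductive step, I assume the formula holds at iteration $t-1$ for every target node $x$, and plug this into the Bellman-Ford update
\[
\vh_q^{(t)}(u,v) = \left(\bigoplus_{(x,r,v)\in\gE(v)} \vh_q^{(t-1)}(u,x) \otimes \vw_q(x,r,v)\right) \oplus \vh_q^{(0)}(u,v).
\]
Using the right distributivity of $\otimes$ over $\oplus$, I can push $\otimes \vw_q(x,r,v)$ inside the sum over paths $P' \in \gP_{ux}$ with $|P'| \le t-1$, producing terms of the form $\bigotimes_{i=1}^{|P'|} \vw_q(e'_i) \otimes \vw_q(x,r,v)$. By associativity of $\otimes$, each such term is precisely the path representation of the concatenated path $P = P' \cdot (x,r,v) \in \gP_{uv}$ with $|P| \le t$ and $|P| \ge 1$. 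Then, by associativity and commutativity of $\oplus$, the double sum over incoming edges and shorter paths reorganizes into a single sum over all paths from $u$ to $v$ of length between $1$ and $t$. Finally, the extra $\oplus\, \vh_q^{(0)}(u,v)$ term contributes the length-$0$ path exactly when $u=v$ and is absorbed by the $\ozero_q$ identity otherwise, yielding the claimed sum over all paths of length $\le t$.

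The main subtlety is bookkeeping: verifying that every path of length $\le t$ from $u$ to $v$ is counted exactly once on both sides, with no double-counting between the edge-wise sum and the boundary contribution. Concretely, I must check that paths of length $\ge 1$ are uniquely decomposed by their last edge $(x,r,v)$, so that distributing and reassociating introduces no duplicates, and that the length-$0$ path case is handled only by the $\vh_q^{(0)}(u,v)$ term. The rest of the proof is a routine symbolic manipulation once the semiring axioms (distributivity, identity absorption $a \otimes \ozero = \ozero$, and identity neutrality $a \oplus \ozero = a$) are invoked at the right places.
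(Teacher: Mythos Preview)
Your proposal is correct and matches the paper's own proof essentially line for line: induction on $t$, base case handled via the empty product $\oone_q$ and empty sum $\ozero_q$, and the inductive step obtained by substituting the hypothesis into the Bellman-Ford update, applying right distributivity of $\otimes$ over $\oplus$, and regrouping paths by their last edge. Your explicit remarks about associativity of $\otimes$, commutativity of $\oplus$, and the unique last-edge decomposition are exactly the justifications the paper leaves implicit between its displayed equations.
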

\begin{proof}
We prove Lemma~\ref{lem:induction} by induction. For the base case $t=0$, there is a single path of length $0$ from $u$ to itself and no path to other nodes. Due to the product definition of path representations, a path of length $0$ is equal to the \emph{multiplication} identity $\oone_q$. Similarly, a summation of no path is equal to the \emph{summation} identity $\ozero_q$. Therefore, we have $\vh_q^{(0)}(u, v) = \mathbbm{1}_q(u = v) = \bigoplus_{P\in\gP_{uv}:|P|=0} \bigotimes_{i=1}^{|P|} \vw_q(e_i)$.

For the inductive case $t > 0$, we consider the second-to-last node $x$ in each path if the path has a length larger than $0$. To avoid overuse of brackets, we use the convention that $\otimes$ and $\bigotimes$ have a higher priority than $\oplus$ and $\bigoplus$. 
\begin{align}
    \vh^{(t)}_q(u,v) 
    &= \left(\bigoplus_{(x, r, v) \in \gE(v)}\vh_q^{(t-1)}(u,x) \otimes \vw_q(x, r, v)\right) \oplus \vh^{(0)}_q(u,v)\\
    &= \left[\bigoplus_{(x, r, v) \in \gE(v)}\left(\bigoplus_{P\in\gP_{ux}:|P|\le t-1} \bigotimes_{i=1}^{|P|} \vw_q(e_i)\right) \otimes \vw_q(x, r, v)\right] \oplus \vh^{(0)}_q(u,v)\label{eq:substitution}\\
    &=
    \left\{\bigoplus_{(x, r, v) \in \gE(v)}\left[\bigoplus_{P\in\gP_{ux}:|P|\le t-1}\left(\bigotimes_{i=1}^{|P|} \vw_q(e_i)\right) \otimes \vw_q(x, r, v)\right]\right\} \oplus \vh^{(0)}_q(u,v)\label{eq:distributive}\\
    &= \left(\bigoplus_{P\in\gP_{uv}:1\le|P|\le t} \bigotimes_{i=1}^{|P|} \vw_q(e_i)\right) \oplus \left( \bigoplus_{P\in\gP_{uv}:|P|=0} \bigotimes_{i=1}^{|P|} \vw_q(e_i)\right)\label{eq:associative}\\
    &= \bigoplus_{P\in\gP_{uv}:|P|\le t} \bigotimes_{i=1}^{|P|} \vw_q(e_i),
\end{align}
where Equation~\ref{eq:substitution} substitutes the inductive assumption for $\vh_q^{(t-1)}(u, x)$, Equation~\ref{eq:distributive} uses the distributive property of $\otimes$ over $\oplus$.
\end{proof}

By comparing Lemma~\ref{lem:induction} and Equation~\ref{eqn:compact}, we can see the intermediate representation converges to our path formulation $\lim_{t\rightarrow \infty} \vh_q^{(t)}(u, v) = \vh_q(u, v)$. More specifically, at most $|\gV|$ iterations are required if we only consider simple paths, i.e., paths without repeating nodes. In practice, for link prediction we find it only takes a very small number of iterations (e.g., $T = 6$) to converge, since long paths make negligible contribution to the task.

\subsection{Traditional Methods}

\bellman*
\begin{proof}
Given that the generalized Bellman-Ford algorithm solves the path formulation when $\langle\oplus, \otimes\rangle$ satisfy a semiring, we only need to show that the operators of the path formulations for traditional methods satisfy semiring structures.

Katz index (Theorem~\ref{thm:katz_index}) and personalized PageRank (Theorem~\ref{thm:pagerank}) use the natural summation $+$ and the natural multiplication $\times$, which obviously satisfy a semiring.

Graph distance (Theorem~\ref{thm:proximity}) uses $\min$ for \emph{summation} and $+$ for \emph{multiplication}. The corresponding identities are $\ozero = +\infty$ and $\oone = 0$. It is obvious that $+$ satisfies the associative property and has identity element $0$.
\begin{itemize}
    \setlength{\parskip}{0pt}
    \setlength{\itemsep}{0pt plus 1pt}
    \item \textbf{Commutative Property.} $\min(a, b) = \min(b, a)$
    \item \textbf{Associative Property.} $\min(\min(a, b), c) = \min(a, \min(b, c))$
    \item \textbf{Identity Element.} $\min(a, +\infty) = a$
    \item \textbf{Absorption Property.} $a + \infty = \infty + a = +\infty$
    \item \textbf{Distributive Property (Left).} $a + \min(b, c) = \min(a + b, a + c)$
    \item \textbf{Distributive Property (Right).} $\min(b, c) + a = \min(b + a, c + a)$
\end{itemize}

Widest path (Theorem~\ref{thm:widest_path}) uses $\max$ for \emph{summation} and $\min$ for \emph{multiplication}. The corresponding identities are $\ozero = -\infty$ and $\oone = +\infty$. We have
\begin{itemize}
    \setlength{\parskip}{0pt}
    \setlength{\itemsep}{0pt plus 1pt}
    \item \textbf{Commutative Property.} $\max(a, b) = \max(b, a)$
    \item \textbf{Associative Property.} $\max(\max(a, b), c) = \max(a, \max(b, c))$
    \item \textbf{Identity Element.} $\max(a, -\infty) = a$
    \item \textbf{Associative Property.} $\min(\min(a, b), c) = \min(a, \min(b, c))$
    \item \textbf{Absorption Property.} $\min(a, -\infty) = \min(-\infty, a) = -\infty$
    \item \textbf{Identity Element.} $\min(a, +\infty) = \min(+\infty, a) = a$
    \item \textbf{Distributive Property (Left).} $\min(a, \max(b, c)) = \max(\min(a, b), \min(a, c))$
    \item \textbf{Distributive Property (Right).} $\min(\max(b, c), a) = \max(\min(b, a), \min(c, a))$
\end{itemize}
where the distributive property can be proved by enumerating all possible orders of $a$, $b$ and $c$.

Most reliable path (Theorem~\ref{thm:reliable_path}) uses $\max$ for \emph{summation} and $\times$ for \emph{multiplication}.  The corresponding identities are $\ozero = 0$ and $\oone = 1$, since all path representations are probabilities in $[0, 1]$. It is obvious that $\times$ satisfies the associative property, the absorption property and has identity element $0$.
\begin{itemize}
    \setlength{\parskip}{0pt}
    \setlength{\itemsep}{0pt plus 1pt}
    \item \textbf{Commutative Property.} $\max(a, b) = \max(b, a)$
    \item \textbf{Associative Property.} $\max(\max(a, b), c) = \max(a, \max(b, c))$
    \item \textbf{Identity Element.} $\max(a, 0) = a$
    \item \textbf{Distributive Property (Left).} $a \times \max(b, c) = \max(a \times b, a \times c)$
    \item \textbf{Distributive Property (Right).} $\max(b, c) \times a = \max(b \times a, c \times a)$
\end{itemize}
where the identity element and the distributive property hold for non-negative elements.
\end{proof}

\section{Time Complexity of GNN Frameworks}
\label{app:complexity}

Here we prove the time complexity for \method and other GNN frameworks.

\subsection{\method}

\begin{lemma}
\label{lem:bellmanford}
The time complexity of one \method run (Algorithm~\ref{alg:framework}) is $O(T(|\gE|d + |\gV|d^2))$.
\end{lemma}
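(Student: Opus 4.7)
The plan is to perform a direct line-by-line accounting of the operations in Algorithm~\ref{alg:framework}, splitting the work into the boundary initialization, the per-iteration message computation, and the per-iteration aggregation, and then summing across the $T$ outer iterations. The key observation enabling the bound is that each \textsc{Message} call acts on a $d$-dimensional vector (and a relation-conditioned edge embedding of dimension $d$) via operations like translation, scaling, or rotation, all of which cost $O(d)$; whereas the \textsc{Aggregate} step, implemented as a permutation-invariant pooling followed by a linear layer plus nonlinearity, costs $O(d)$ per incoming message for the pooling but additionally $O(d^2)$ per node for the final linear transformation.

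Concretely, I would first argue that the boundary loop over $v \in \gV$ on lines~1--3 costs $O(|\gV| d)$, which is absorbed into the stated bound. Next, fixing one iteration $t$, I would observe that the double loop on lines~5--12 visits each directed edge $(x, r, v) \in \gE$ exactly once, incurring $O(d)$ per edge for the \textsc{Message} call and the set insertion, for a total of $O(|\gE| d)$ in message work. Then, for each of the $|\gV|$ target nodes, the \textsc{Aggregate} call combines its set $\gM_v^{(t)}$ (whose total size across all $v$ is $|\gE| + |\gV|$) using an $O(d)$ pooling per element, contributing another $O((|\gE| + |\gV|) d) = O(|\gE| d + |\gV| d)$, and then applies a per-node linear transformation of cost $O(d^2)$, contributing $O(|\gV| d^2)$. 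Summing, a single iteration costs $O(|\gE| d + |\gV| d^2)$.

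Finally, I would multiply by the number of outer iterations $T$ to obtain $O(T(|\gE| d + |\gV| d^2))$, noting that the boundary cost $O(|\gV| d)$ is dominated. I would remark that this accounting implicitly assumes the graph is at least weakly connected so that $|\gE| \geq |\gV| - 1$ (otherwise the $|\gV| d$ term from pooling needs to be kept explicit, but it is still absorbed by $|\gV| d^2$).

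The step that needs the most care is the aggregation accounting: one must distinguish between the pooling step, which is linear in the total number of messages (hence contributes an $|\gE| d$ term rather than $|\gV||\gE| d$), and the subsequent linear transformation, which is applied once per node and thus contributes the $|\gV| d^2$ term. Conflating these two would either overcount (if one charges $d^2$ per edge) or undercount (if one omits the transformation entirely). A secondary subtlety is verifying that the design choices for \textsc{Message} discussed in Section~\ref{sec:nbfn} (TransE-style translation, DistMult-style scaling, RotatE-style rotation) all genuinely run in $O(d)$ time per call; this is immediate from inspection but worth explicitly stating.
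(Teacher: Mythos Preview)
Your proposal is correct and follows essentially the same approach as the paper: a direct per-function accounting of Algorithm~\ref{alg:framework}, charging $O(d)$ per \textsc{Indicator} and \textsc{Message} call and $O(d^2)$ per \textsc{Aggregate} call for the linear transformation, then summing over $T$ iterations. Your write-up is in fact slightly more careful than the paper's, as you explicitly separate the pooling cost (linear in the total number of messages) from the per-node linear map inside \textsc{Aggregate}; the paper simply states that \textsc{Aggregate} is called $T|\gV|$ times over $T|\gE|$ messages and costs an additional $O(d^2)$ per call.
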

\begin{proof}
We break the time complexity by \textsc{Indicator}, \textsc{Message} and \textsc{Aggregate} functions.

\textsc{Indicator} is called $|\gV|$ times, and a single call to \textsc{Indicator} takes $O(d)$ time. \textsc{Message} is called $T(|\gE| + |\gV|)$ times, and a single call to \textsc{Message}, i.e., a relation operator, takes $O(d)$ time. \textsc{Aggregate} is called $T|\gV|$ times over a total of $T|\gE|$ messages with $d$ dimensions. Each call to \textsc{Aggregate} additionally takes $O(d^2)$ time due to the linear transformations in the function.

Therefore, the total complexity is summed to $O(T(|\gE|d + |\gV|d^2))$.
\end{proof}

In practice, we find a small constant $T$ works well for link prediction, and Lemma~\ref{lem:bellmanford} can be reduced to $O(|\gE|d + |\gV|d^2)$ time.

Now consider applying \method to infer the likelihood of all possible triplets. Without loss of generality, assume we want to predict the tail entity for each head entity and relation $p(v | u, q)$. We group triplets with the same condition $u, q$ together, where each group contains $|\gV|$ triplets. For triplets in a group, we only need to execute Algorithm~\ref{alg:framework} once to get their predictions. Therefore, the amortized time for a single triplet is $O\left(\frac{|\gE|d}{|\gV|} + d^2\right)$.

\subsection{VGAE / RGCN}
RGCN is a message-passing GNN applied to multi-relational graphs, with the message function being a per-relation linear transformation. VGAE can be viewed as a special case of RGCN applied to single-relational graphs. The time complexity of RGCN is similar to Lemma~\ref{lem:bellmanford}, except that each call to the message function takes $O(d^2)$ time due to the linear transformation. Therefore, the total complexity is $O(T(|\gE|d^2 + |\gV|d^2))$, where $T$ refers to the number of layers in RGCN. Since $|\gV| \leq |\gE|$, the complexity is reduced to $O(T|\gE|d^2)$\footnote{By moving the linear transformations from the message function to the aggregation function, one can also get an implementation of RGCN with $O(T|\gV||\gR|d^2)$ time, which is better for dense graphs but worse for sparse graphs. For knowledge graph datasets used in this paper, the above $O(T|\gE|d^2)$ implementation is better.}. In practice, $T$ is a small constant and we get $O(|\gE|d^2)$ complexity.

While directly executing RGCN once for each triplet is costly, a smart way to apply RGCN for inference is to first compute all node representations, and then perform link prediction with the node representations. The first step runs RGCN once for $|\gV|^2|\gR|$ triplets, while the second step takes $O(d)$ time. Therefore, the amortized time for a single triplet is $O\left(\frac{|\gE|d^2}{|\gV|^2|\gR|} + d\right)$. For large graphs and reasonable choices of $d$, we have $|\gE|d \leq |\gV|^2|\gR|$ and the amortized time can be reduced to $O(d)$.

\subsection{NeuralLP / DRUM}
DRUM can be viewed as a special case of \method with \textsc{Message} being Hadamard product and \textsc{Aggregate} being natural summation. NeuralLP is a special case of DRUM where the dimension $d$ equals to 1. Since there is no linear transformation in their \textsc{Aggregate} functions, the amortized time complexity for the message passing part is $O\left(\frac{T|\gE|d}{|\gV|}\right)$. Both DRUM and NeuralLP additionally use an LSTM to learn the edge weights for each layer, which additionally costs $O(Td^2)$ time for $T$ layers. $T$ is small and can be ignored like in other methods. Therefore, the amortized time of two parts is summed to $O\left(\frac{|\gE|d}{|\gV|} + d^2\right)$.

\subsection{SEAL / GraIL}
GraIL first extracts a local subgraph surrounding the link, and then applies RGCN to the local subgraph. SEAL can be viewed as a special case of GraIL applied to single-relational graphs. Therefore, their amortized time is the same as that of one RGCN run, which is $O(|\gE|d^2)$.

Note that one may still run GraIL on large graphs by restricting the local subgraphs to be very small, e.g., within 1-hop neighborhood of the query entities. However, this will severely harm the performance of link prediction. Moreover, most real-world graphs are small-world networks, and a moderate radius can easily cover a non-trivial number of nodes and edges, which costs a lot of time for GraIL.

\section{Number of Parameters}
\label{app:num_param}

\begin{table}[!h]
    \centering
    \caption{Number of parameters in \method. The number of parameters only grows with the number of relations $|\gR|$, rather than the number of nodes $|\gV|$ or edges $|\gE|$. For FB15k-237 augmented with flipped triplets, $|\gR|$ is 474. Our best configuration uses $T=6$, $d=32$ and hidden dimension $m=64$.}
    \label{tab:num_param}
    \begin{tabular}{lcccc}
        \toprule
                            & \multicolumn{2}{c}{\bf{\#Parameter}} \\
                            & \bf{Analytic Formula} & \bf{FB15k-237} \\
        \midrule
        \textsc{Indicator}  & $|\gR|d$              & 15,168        \\
        \textsc{Message}    & $T|\gR|d(d+1)$        & 3,003,264     \\
        \textsc{Aggregate}  & $Td(13d+3)$           & 80,448        \\
        $f(\cdot)$          & $m(2d+1)+m+1$         & 4,225         \\
        \midrule
        Total               &                       & 3,103,105     \\
        \bottomrule
    \end{tabular}
\end{table}

One advantage of \method is that it requires much less parameters than embedding methods. For example, on FB15k-237, \method requires 3M parameters while TransE requires 30M parameters. Table~\ref{tab:num_param} shows a break down of number of parameters in \method. Generally, the number of parameters in \method scales linearly w.r.t. the number of relations, regardless the number of entities in the graph, which makes \method more parameter-efficient for large graphs.

\section{Statistics of Datasets}
\label{app:dataset_stat}

Dataset statistics of two transductive settings, i.e., knowledge graph completion and homogeneous graph link prediction, are summarized in Table~\ref{tab:kg_statistics} and \ref{tab:homo_statistics}. Dataset statistics of inductive relation prediction is summarized in Table~\ref{tab:inductive_statistics}.

We use the standard transductive splits~\cite{toutanova2015observed, dettmers2018convolutional} and inductive splits~\cite{teru2020inductive} for knowledge graphs. For homogeneous graphs, we follow previous works~\cite{kipf2016variational, davidson2018hyperspherical} and randomly split the edges into train/validation/test sets with a ratio of 85:5:10. All the homogeneous graphs used in this paper are undirected. Note that for inductive relation prediction, the original paper~\cite{teru2020inductive} actually uses a \emph{transductive valid set} that shares the same set of fact triplets as the training set for hyperparameter tuning. The \emph{inductive test set} contains entities, query triplets and fact triplets that never appear in the training set. The same data split is adopted in this paper for a fair comparison.

\begin{table}[!h]
    \begin{minipage}[t]{0.58\textwidth}
        \centering
        \caption{Dataset statistics for knowledge graph completion.}
        \label{tab:kg_statistics}
        \begin{adjustbox}{max width=\textwidth}
            \begin{tabular}{lccccc}
                \toprule
                \multirow{2}{*}{\bf{Dataset}} & \multirow{2}{*}{\bf{\#Entity}} & \multirow{2}{*}{\bf{\#Relation}} & \multicolumn{3}{c}{\bf{\#Triplet}} \\
                & & & \bf{\#Train} & \bf{\#Validation} & \bf{\#Test} \\
                \midrule
                FB15k-237~\cite{toutanova2015observed} & 14,541 & 237 & 272,115 & 17,535 & 20,466 \\
                WN18RR~\cite{dettmers2018convolutional} & 40,943 & 11 & 86,835 & 3,034 & 3,134 \\
                \bottomrule
            \end{tabular}
        \end{adjustbox}
    \end{minipage}
    \hspace{0.3em}
    \begin{minipage}[t]{0.42\textwidth}
        \centering
        \caption{Dataset statistics for homogeneous link prediction.}
        \label{tab:homo_statistics}
        \begin{adjustbox}{max width=\textwidth}
            \begin{tabular}{lcccc}
                \toprule
                \multirow{2}{*}{\bf{Dataset}} & \multirow{2}{*}{\bf{\#Node}} & \multicolumn{3}{c}{\bf{\#Edge}} \\
                & & \bf{\#Train} & \bf{\#Validation} & \bf{\#Test} \\
                \midrule
                Cora~\cite{sen2008collective} & 2,708 & 4,614 & 271 & 544 \\
                CiteSeer~\cite{sen2008collective} & 3,327 & 4,022 & 236 & 474 \\
                PubMed~\cite{sen2008collective} & 19,717 & 37,687 & 2,216 & 4,435 \\
                \bottomrule
            \end{tabular}
        \end{adjustbox}
    \end{minipage}
\end{table}

\begin{table}[!h]
    \centering
    \caption{Dataset statistics for inductive relation prediction. Queries refer to the triplets that are used as training or test labels, while facts are the triplets used as training or test inputs. In the training sets, all queries are also provided as facts.}
    \label{tab:inductive_statistics}
    \begin{adjustbox}{max width=\textwidth}
        \begin{tabular}{llcccccccccc}
            \toprule
            \multirow{2}{*}{\bf{Dataset}} & & \multirow{2}{*}{\bf{\#Relation}} & \multicolumn{3}{c}{\bf{Train}} & \multicolumn{3}{c}{\bf{Validation}} & \multicolumn{3}{c}{\bf{Test}} \\
            & & & \bf{\#Entity} & \bf{\#Query} & \bf{\#Fact} & \bf{\#Entity} & \bf{\#Query} & \bf{\#Fact} & \bf{\#Entity} & \bf{\#Query} & \bf{\#Fact} \\
            \midrule
            \multirow{4}{*}{FB15k-237~\cite{teru2020inductive}}
            & v1 & 180 & 1,594 & 4,245 & 4,245 & 1,594 & 489 & 4,245 & 1,093 & 205 & 1,993\\
            & v2 & 200 & 2,608 & 9,739 & 9,739 & 2,608 & 1,166 & 9,739 & 1,660 & 478 & 4,145 \\
            & v3 & 215 & 3,668 & 17,986 & 17,986 & 3,668 & 2,194 & 17,986 & 2,501 & 865 & 7,406 \\
            & v4 & 219 & 4,707 & 27,203 & 27,203 & 4,707 & 3,352 & 27,203 & 3,051 & 1,424 & 11,714 \\
            \multirow{4}{*}{WN18RR~\cite{teru2020inductive}}
            & v1 & 9 & 2,746 & 5,410 & 5,410 & 2,746 & 630 & 5,410 & 922 & 188 & 1,618 \\
            & v2 & 10 & 6,954 & 15,262 & 15,262 & 6,954 & 1,838 & 15,262 & 2,757 & 441 & 4,011\\
            & v3 & 11 & 12,078 & 25,901 & 25,901 & 12,078 & 3,097 & 25,901 & 5,084 & 605 & 6,327\\
            & v4 & 9 & 3,861 & 7,940 & 7,940 & 3,861 & 934 & 7,940 & 7,084 & 1,429 & 12,334 \\
            \bottomrule
        \end{tabular}
    \end{adjustbox}
\end{table}

\section{Implementation Details}
\label{app:implementation}

\begin{table}[!h]
    \centering
    \caption{Hyperparameter configurations of \method on different datasets. Adv. temperature corresponds to the temperature in self-adversarial negative sampling~\cite{sun2019rotate}. Note for FB15k-237 and WN18RR, we use the same hyperparameters for their transductive and inductive settings. We find our model configuration is robust across all datasets, therefore we only tune the learning hyperparameters for each dataset. All the hyperparameters are chosen by the performance on the validation set.}
    \label{tab:hyperparameter}
    \begin{adjustbox}{max width=\textwidth}
        \begin{tabular}{llccccc}
            \toprule
            \multicolumn{2}{l}{\bf{Hyperparameter}}
                                 & \bf{FB15k-237} & \bf{WN18RR} & \bf{Cora} & \bf{CiteSeer} & \bf{PubMed} \\
            \midrule
            \multirow{2}{*}{\bf{GNN}}
                                 & \#layer($T$) & 6         & 6      & 6    & 6        & 6      \\
                                 & hidden dim.  & 32        & 32     & 32   & 32       & 32     \\
            \midrule
            \multirow{2}{*}{\bf{MLP}}
                                 & \#layer     & 2         & 2      & 2    & 2        & 2      \\
                                 & hidden dim. & 64        & 64     & 64   & 64       & 64    \\
            \midrule
            \multirow{2}{*}{\bf{Batch}}
                                 & \#positive  & 256       & 128    & 256  & 256    & 64     \\
                                 & \#negative/\#positive($n$) & 32     & 32   & 1      & 1     & 1      \\
            \midrule
            \multirow{4}{*}{\bf{Learning}}
                                 & optimizer     & Adam   & Adam   & Adam   & Adam   & Adam    \\
                                 & learning rate & 5e-3   & 5e-3   & 5e-3   & 5e-3   & 5e-3    \\
                                 & \#epoch       & 20     & 20     & 20     & 20     & 20      \\
                                 & adv. temperature & 0.5 & 1      & -      & -      & -       \\
            \bottomrule
        \end{tabular}
    \end{adjustbox}
\end{table}

Our implementation generally follows the open source codebases of knowledge graph completion\footnote{\url{https://github.com/DeepGraphLearning/KnowledgeGraphEmbedding}. MIT license.} and homogeneous graph link prediction\footnote{\url{https://github.com/tkipf/gae}. MIT license.}. Table~\ref{tab:hyperparameter} lists the hyperparameter configurations for different datasets. Table~\ref{tab:wall_time} shows the wall time of training and inference on different datasets.

\textbf{Data Augmentation.} For knowledge graphs, we follow previous works~\cite{yang2017differentiable, sadeghian2019drum} and augment each triplet \edge{u, q, v} with a flipped triplet \edge{v, q$^{-1}$, u}. For homogeneous graphs, we follow previous works~\cite{kipf2016semi, kipf2016variational} and augment each node $u$ with a self loop \edge{u, u}.

\textbf{Architecture Details.} We apply Layer Normalization~\cite{ba2016layer} and short cut connection to accelerate the training of \method. Layer Normalization is applied after each \textsc{Aggregate} function. The feed-forward network $f(\cdot)$ is instantiated as a MLP. ReLU is used as the activation function for all hidden layers. For undirected graphs, we symmetrize the pair representation by taking the sum of $\vh_q(u, v)$ and $\vh_q(v, u)$.

\textbf{Training Details.} We train \method on 4 Tesla V100 GPUs with standard data parallelism. During training, we drop out edges that directly connect query node pairs to encourage the model to capture longer paths and prevent overfitting. We select the best checkpoint for each model based on its performance on the validation set. The selection criteria is MRR for knowledge graphs and AUROC for homogeneous graphs.

\textbf{Fused Message Passing.} To reduce memory footprint and better utilize GPU hardware, we follow the efficient implementation of GNNs~\cite{huang2020ge} and implement customized PyTorch operators that combines \textsc{Message} and \textsc{Aggregate} functions into a single operation, without creating all messages explicitly. This reduces the memory complexity of NBFNet from $O(|\gE|d)$ to $O(|\gV|d)$.

\begin{table}[!h]
    \centering
    \caption{Wall time of \method on different datasets and in different settings (Table~\ref{tab:kg_result}, \ref{tab:homo_result} and \ref{tab:inductive_result}). For inductive setting, the total time over 4 split versions is reported.}
    \label{tab:wall_time}
    \begin{adjustbox}{max width=\textwidth}
    \begin{tabular}{lccccccc}
        \toprule
        \multirow{2}{*}{\bf{Wall Time}} & \multicolumn{5}{c}{\bf{Transductive}} & \multicolumn{2}{c}{\bf{Inductive}} \\
                       & FB15k-237 & WN18RR & Cora & CiteSeer & PubMed & FB15k-237 & WN18RR \\
        \midrule
        \bf{Training}  & 9.7 hrs  & 4.4 hrs  & 5.5 mins & 5.3 mins & 5.6 hrs & 23 mins & 41 mins \\
        \bf{Inference} & 4.0 mins & 2.4 mins & < 1 sec & < 1 sec & 25 secs & 6 secs & 20 secs \\
        \bottomrule
    \end{tabular}
    \end{adjustbox}
\end{table}

\section{Experimental Results on OGB Datasets}
\label{app:ogb}

To demonstrate the effectiveness of \method on large-scale graphs, we additionally evaluate our method on two knowledge graph datasets from OGB~\cite{hu2020ogb}, ogbl-biokg and WikiKG90M. We follow the standard evaluation protocol of OGB link property prediction, and compute the mean reciprocoal rank (MRR) of the true entity against 1,000 negative entities.

\subsection{Results on ogbl-biokg}

Ogbl-biokg is a large biomedical knowledge graph that contains 93,773 entities, 51 relations and 5,088,434 triplets. We compare \method with 6 embedding methods on this dataset. Note by the time of this work, only embedding methods are available for such large-scale datasets. Table~\ref{tab:ogbl-biokg} shows the results on ogbl-biokg. NBFNet achieves the best result compared to all methods reported on the official leaderboard\footnote{\url{https://ogb.stanford.edu/docs/leader_linkprop/\#ogbl-biokg}} with much fewer parameters. Note the previous best model AutoSF is based on architecture search and requires more computation resource than NBFNet for training.

\begin{table}[!h]
    \centering
    \caption{Knowledge graph completion results on ogbl-biokg. Results of compared methods are taken from the OGB leaderboard.}
    \label{tab:ogbl-biokg}
    \footnotesize
    \begin{tabular}{llccc}
        \toprule
        \bf{Class} & \bf{Method} & \bf{Test MRR} & \bf{Validation MRR} & \bf{\#Params}\\
        \midrule
        \multirow{6}{*}{\bf{Embeddings}}
        & TransE~\cite{bordes2013translating} & 0.7452 & 0.7456 & 187,648,000 \\
        & DistMult~\cite{yang2015embedding} & 0.8043 & 0.8055 & 187,648,000 \\
        & ComplEx~\cite{trouillon2016complex} & 0.8095 & 0.8105 & 187,648,000 \\
        & RotatE~\cite{sun2019rotate} & 0.7989 & 0.7997 & 187,597,000 \\
        & AutoSF~\cite{zhang2020autosf} & 0.8309 & \best{0.8317} & 93,824,000 \\
        & PairRE~\cite{chao2021pairre} & 0.8164 & 0.8172 & 187,750,000 \\
        \midrule
        \bf{GNNs}
        & \method & \best{0.8317} & \best{0.8318} & 734,209\\
        \bottomrule
    \end{tabular}
\end{table}

\subsection{Results on WikiKG90M}

WikiKG90M is an extremely large dataset used in OGB large-scale challenge~\cite{hu2021ogb}, hold at KDD Cup 2021. It is a general-purpose knowledge graph containing 87,143,637 entities, 1,315 relations and 504,220,369 triplets.

To apply \method to such a large scale, we use a bidirectional breath-first-search (BFS) algorithm to sample a local subgraph for each query. Given a query, we generate a $k$-hop neighborhood for each of the head entity and the candidate tail entities, based on a BFS search. The union of all generated neighborhoods is then collected as the sampled graph. With this sampling algorithm, any path within a length of $2k$ between the head entity and any tail candidate is guaranteed to present in the sampled graph. See Figure~\ref{fig:bfs_sampling} for illustration. While a standard single BFS algorithm computing the $2k$-hop neighborhood of the head entity has the same guarantee, a bidirectional BFS algorithm significantly reduces the number of nodes and edges in the sampled graph.

\begin{figure}[!t]
    \centering
    \begin{subfigure}{0.32\textwidth}
        \centering
        \includegraphics[width=\textwidth]{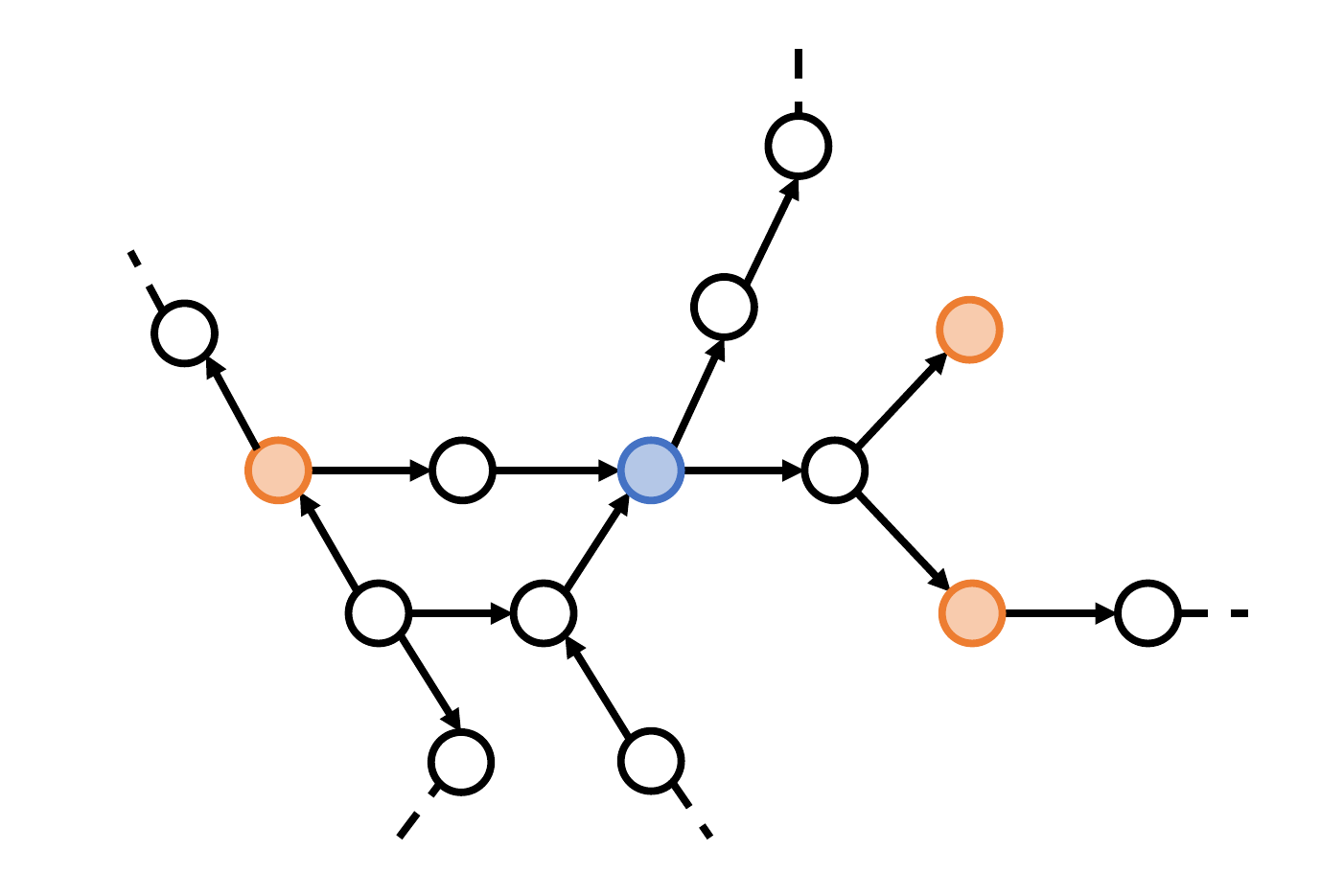}
        \caption{Original graph}
    \end{subfigure}
    \begin{subfigure}{0.32\textwidth}
        \centering
        \includegraphics[width=\textwidth]{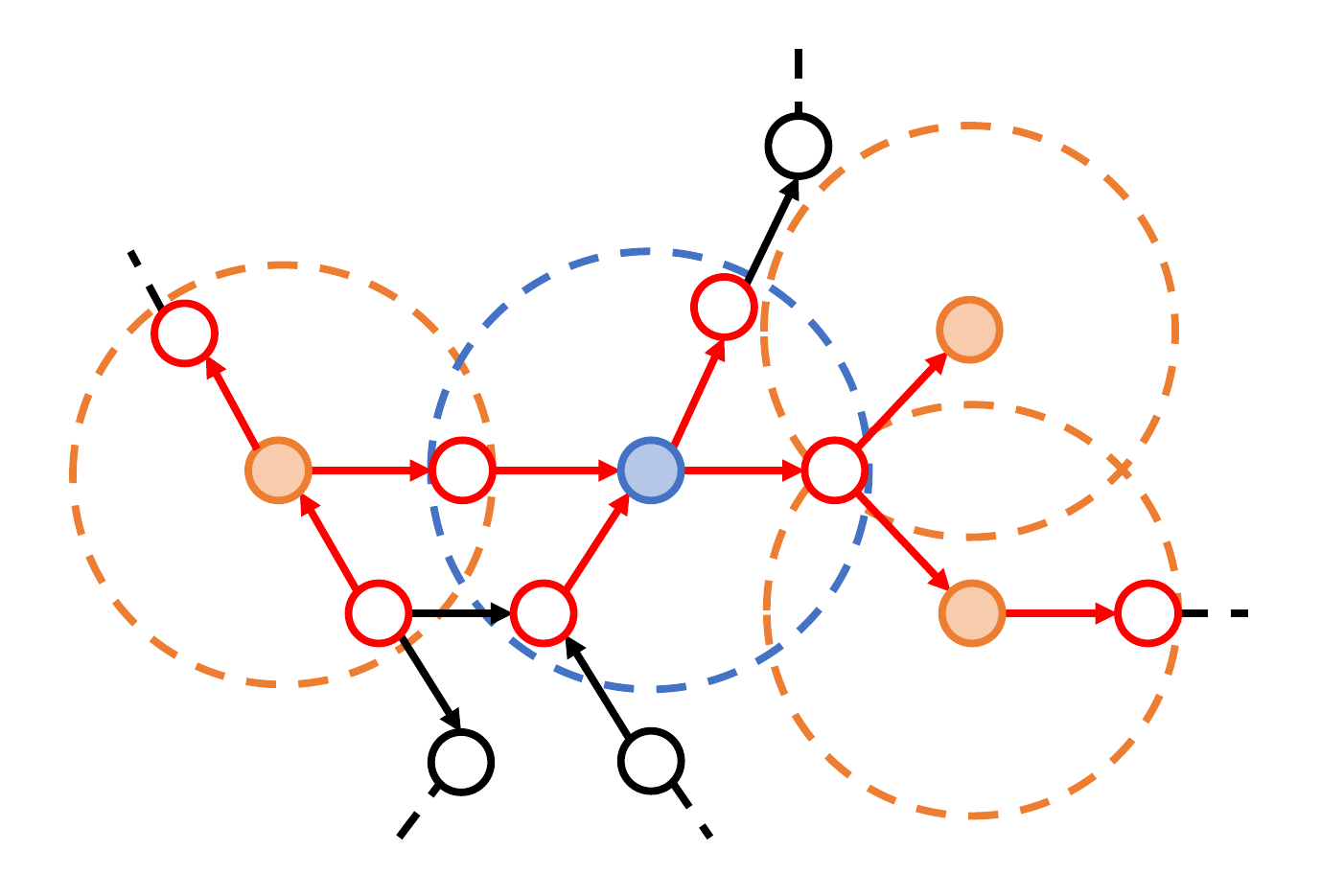}
        \caption{Bidirectional BFS}
    \end{subfigure}
    \begin{subfigure}{0.32\textwidth}
        \centering
        \includegraphics[width=\textwidth]{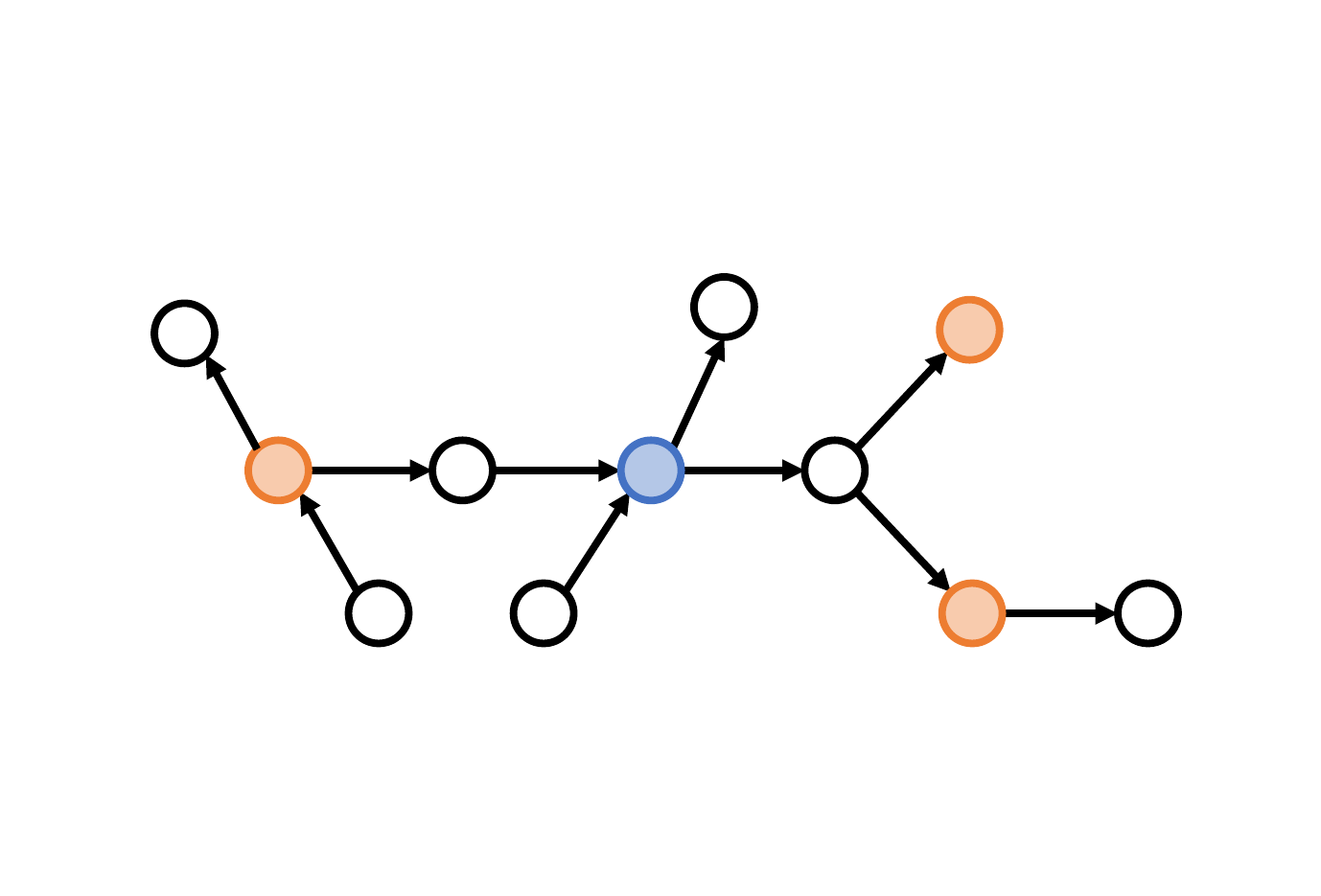}
        \caption{Sampled graph}
    \end{subfigure}
    \caption{Illustration of bidirectional BFS sampling. For a \textcolor{myblue}{head entity} and multiple \textcolor{myorange}{tail candidates}, we use BFS to sample a $k$-hop neighborhood around each entity, regardless of the direction of edges. The neighborhood is denoted by dashed circles. The nodes and edges visited by the BFS algorithm are extracted to generate the sampled graph. Best viewed in color.}
    \label{fig:bfs_sampling}
\end{figure}

We additionally downsample the neighbors when expanding the neighbors of an entity, to tackle entities with large degrees. For each entity visited during the BFS algorithm, we downsample its outgoing neighbors and incoming neighbors to $m$ entities respectively.

Table~\ref{tab:ogb_lsc} shows the results of NBFNet on WikiKG90M validation set. Our best single model uses $k=2$ and $m=100$. While the validation set requires to rank the true entity against 1,000 negative entities, in practice it is not mandatory to draw 1,000 negative samples for each positive sample during training. We find that reducing the negative samples from 1,000 to 20 and increasing the batch size from 4 to 64 provides a better result, although it creates a distribution shift between sampled graphs in training and validation. We leave further research of such distribution shift as a future work.

\begin{table}[!h]
    \begin{minipage}{0.58\textwidth}
    \centering
    \caption{Results of different \textsc{Message} and \textsc{Aggregate} functions on FB15k-237.}
    \begin{adjustbox}{max width=\textwidth}
    \begin{tabular}{llccccc}
        \toprule
        \bf{\textsc{Message}} & \bf{\textsc{Aggregate}} & \bf{MR} & \bf{MRR} & \bf{H@1} & \bf{H@3} & \bf{H@10} \\
        \midrule
        \multirow{4}{*}{TransE~\cite{bordes2013translating}}
        & Sum & 191 & 0.297 & 0.217 & 0.321 & 0.453 \\
        & Mean & 161 & 0.310 & 0.218 & 0.339 & 0.496 \\
        & Max & 135 & 0.377 & 0.282 & 0.415 & 0.565 \\
        & PNA~\cite{corso2020principal} & 129 & 0.383 & 0.288 & 0.420 & 0.568 \\
        \midrule
        \multirow{4}{*}{DistMult~\cite{yang2015embedding}}
        & Sum & 136 & 0.388 & 0.294 & 0.427 & 0.574 \\
        & Mean & 132 & 0.384 & 0.287 & 0.425 & 0.577 \\
        & Max & 136 & 0.374 & 0.279 & 0.412 & 0.563 \\
        & PNA~\cite{corso2020principal} & \bf{114} & \bf{0.415} & \bf{0.321} & \bf{0.454} & \bf{0.599} \\
        \midrule
        \multirow{4}{*}{RotatE~\cite{sun2019rotate}}
        & Sum & 129 & 0.392 & 0.298 & 0.429 & 0.580 \\
        & Mean & 138 & 0.376 & 0.278 & 0.416 & 0.571 \\
        & Max & 139 & 0.385 & 0.290 & 0.423 & 0.572 \\
        & PNA~\cite{corso2020principal} & \bf{117} & \bf{0.414} & \bf{0.323} & \bf{0.454} & \bf{0.593} \\
        \bottomrule
    \end{tabular}
    \end{adjustbox}
    \label{tab:msg_agg_full}
    \end{minipage}
    \hspace{0.3em}
    \begin{minipage}{0.42\textwidth}
    \centering
    \caption{Knowledge graph completion results on WikiKG90M validation set.}
    \label{tab:ogb_lsc}
    \begin{adjustbox}{max width=\textwidth}
        \begin{tabular}{lcc}
            \toprule
            \bf{Model} & Single Model & 6 Model Ensemble \\
            \midrule
            \bf{MRR} & 0.924 & 0.930 \\
            \bottomrule
        \end{tabular}
    \end{adjustbox}
    \vspace{1.5em}
    
    \centering
    \caption{Results of different number of layers on FB15k-237.}
    \begin{adjustbox}{max width=\textwidth}
    \begin{tabular}{lccccc}
        \toprule
        \bf{\#Layers ($T$)} & \bf{MR} & \bf{MRR} & \bf{H@1} & \bf{H@3} & \bf{H@10} \\
        \midrule
        2 & 191 & 0.345 & 0.261 & 0.377 & 0.510 \\
        4 & 119 & 0.409 & 0.315 & 0.450 & 0.592 \\
        6 & \bf{114} & \bf{0.415} & \bf{0.321} & \bf{0.454} & \bf{0.599} \\
        8 & \bf{115} & \bf{0.416} & \bf{0.322} & \bf{0.457} & \bf{0.599} \\
        \bottomrule
    \end{tabular}
    \end{adjustbox}
    \label{tab:num_layer_full}
    \end{minipage}
\end{table}

\section{Ablation Study}
\label{app:ablation}

Table~\ref{tab:msg_agg_full} shows the full results of different \textsc{Message} and \textsc{Aggregate} functions. Table~\ref{tab:num_layer_full} shows the full results of \method with different number of layers.

\end{document}